\newcommand*{\addFileDependency}[1]{
  \typeout{(#1)}
  \@addtofilelist{#1}
  \IfFileExists{#1}{}{\typeout{No file #1.}}
}
\newcommand*{\myexternaldocument}[1]{%
    \externaldocument{#1}%
    \addFileDependency{#1.tex}%
    \addFileDependency{#1.aux}%
}
\newtheorem{prop}{Proposition}
\def\eg{e.g.~}
\def\ie{i.e.~}
\def\vs{vs.~}
\newcommand{\x}{\mathbf{x}}
\newcommand{\y}{\mathbf{y}}
\DeclarePairedDelimiterX{\Iintv}[1]{\llbracket}{\rrbracket}{\iintvargs{#1}}
\NewDocumentCommand{\iintvargs}{>{\SplitArgument{1}{,}}m}
{\iintvargsaux#1} %
\NewDocumentCommand{\iintvargsaux}{mm} {#1\mkern1.5mu..\mkern1.5mu#2}
\newcommand\allowcomments
\title{Probabilistic Time Series Forecasting with Structured Shape and Temporal Diversity}
\author{
  Vincent Le Guen  $^{1,2}$\\
  \texttt{vincent.le-guen@edf.fr}
  \And Nicolas Thome $^2$ \\
  \texttt{nicolas.thome@cnam.fr}
}
\begin{document}

\maketitle

\vspace{-0.5cm}
\begin{center}
    $^1$ EDF R\&D, Chatou, France \\ 
    $^2$ Conservatoire National des Arts et Métiers, CEDRIC, Paris, France
\end{center}{}
\vspace{1cm}

\begin{abstract}
Probabilistic forecasting consists in predicting a distribution of possible future outcomes. In this paper, we address this problem for non-stationary time series, which is very challenging yet crucially important. We introduce the STRIPE model for representing structured diversity based on shape and time features, ensuring both probable predictions while being sharp and accurate. STRIPE is agnostic to the forecasting model, and we equip it with a diversification mechanism relying on determinantal point processes (DPP). We introduce two DPP kernels for modeling diverse trajectories in terms of shape and time, which are both differentiable and proved to be positive semi-definite. To have an explicit control on the diversity structure, we also design an iterative sampling mechanism to disentangle shape and time representations in the latent space. Experiments carried out on synthetic datasets show that STRIPE significantly outperforms baseline methods for representing diversity, while maintaining accuracy of the forecasting model. We also highlight the relevance of the iterative sampling scheme and the importance to use different criteria for measuring quality and diversity. Finally, experiments on real datasets illustrate that STRIPE is able to outperform state-of-the-art probabilistic forecasting approaches in the best sample prediction.
\end{abstract}

\section{Introduction}

Time series forecasting consists in analysing historical signal correlations to anticipate future outcomes. In this work, we focus on probabilistic forecasting in non-stationary contexts, \ie we aim at producing plausible and diverse predictions where future trajectories can present sharp variations. This forecasting context is of crucial importance in many applicative fields, \eg  climate \cite{xingjian2015convolutional,leguen-fisheye,dona2020pde}, optimal control or regulation \citep{zheng2017electric,masum2018multi}, traffic flow \cite{lv2015traffic,li2017diffusion}, healthcare~\cite{chauhan2015anomaly,alaa2019attentive}, stock markets~\cite{ding2015deep,chatigny2020financial}, \textit{etc}.
Our motivation is illustrated in the example of the blue input in Figure \ref{fig1}(a): we aim at performing predictions covering the full distribution of future trajectories, whose samples are shown in green. 

State-of-the-art methods for time series forecasting currently rely on deep neural networks, which exhibit strong abilities in modeling complex nonlinear dependencies between variables and time. Recently, increasing attempts have been made for improving architectures for accurate predictions \cite{lai2018modeling,sen2019think,li2019enhancing,oreshkin2019n,leguen-phydnet} or for making predictions sharper, \eg  by explicitly modeling dynamics~\cite{chen2018neural,dupont2019augmented,rubanova2019latent}, or by designing specific loss functions addressing the drawbacks of blurred prediction with mean squared error (MSE) training~\cite{cuturi2017soft,rivest2019new,leguen19dilate,vayer2020time}. Although Figure \ref{fig1}(b) shows that such approaches produce sharp and realistic forecasts,  their deterministic nature limits them to a single trajectory prediction without uncertainty quantification.

Methods targeting probabilistic forecasting enable to sample diverse predictions from a given input. This includes deterministic methods that predict the quantiles of the predictive distribution or probabilistic methods that sample future values from a learned approximate distribution, parameterized explicitly (\eg Gaussian \cite{salinas2017deepar,rangapuram2018deep,salinas2019high}), or implicitly with latent generative models \cite{yuan2019diverse,koochali2020if,rasul2020multi}. These approaches are commonly trained using MSE or variants for probabilisting forecasts, \eg quantile loss \cite{koenker2001quantile}, and consequently often loose the ability to represent sharp predictions, as shown in Figure~\ref{fig1}(c) for~\cite{yuan2019diverse}. These generative models also lack an explicit structure to control the type of diversity in the latent space.

In this work, we introduce a model for including Shape and Time diverRsIty in Probabilistic forEcasting (STRIPE). As shown in Figure~\ref{fig1}(d), this enables to produce sharp and diverse forecasts, which fit well the ground truth distribution of trajectories in Figure~\ref{fig1}(a).

STRIPE presented in section~\ref{sec:striate} is agnostic to the predictive model, and we use both deterministic or generative models in our experiments. STRIPE encompasses the following contributions. Firstly, we introduce a structured shape and temporal diversity mechanism based on determinantal point processes (DPP). We introduce two DPP kernels for modeling diverse trajectories in terms of shape and time, which are both differentiable and proved to be positive semi-definite (section~\ref{sec31}). To have an explicit control on the diversity structure, we also design an iterative sampling mechanism to disentangle shape and time representations in the latent space (section~\ref{sec32}).

Experiments are conducted in section~\ref{sec:expes} on synthetic datasets to evaluate the ability of STRIPE to match the ground truth trajectory distribution. 
We show that STRIPE significantly outperforms baseline methods for representing diversity, while maintaining the accuracy of the forecasting model. Experiments on real datasets further show that STRIPE is able to outperform state-of-the-art probabilistic forecasting approaches when evaluating the best sample (\ie diversity), while being equivalent based on its mean prediction (\ie quality).

\begin{figure}
\begin{tabular}{cccc}
\hspace{-1.5cm} \includegraphics[height=4.5cm]{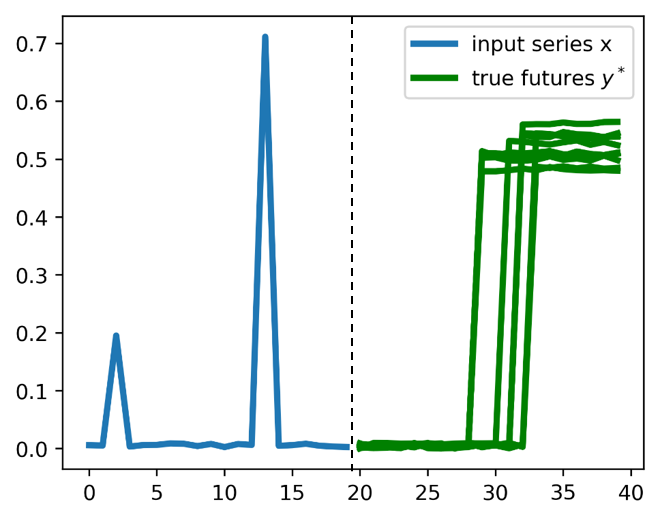}   & \hspace{-0.3cm} 
\includegraphics[height=4.5cm]{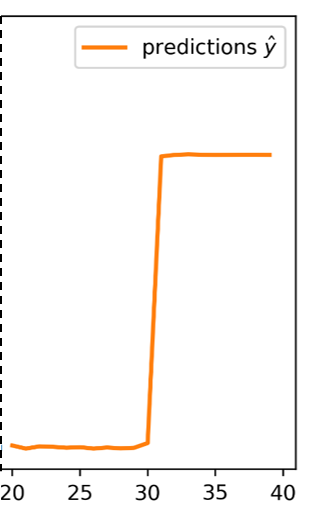}   &  
\hspace{-0.3cm} 
\includegraphics[height=4.5cm]{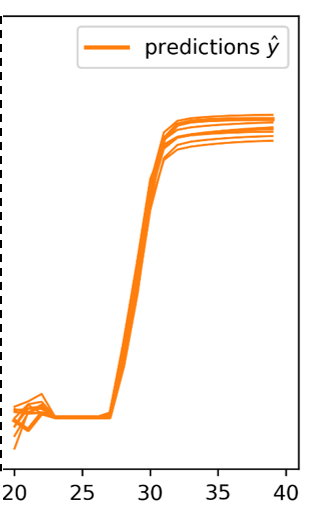}   &  
\hspace{-0.3cm} 
\includegraphics[height=4.5cm]{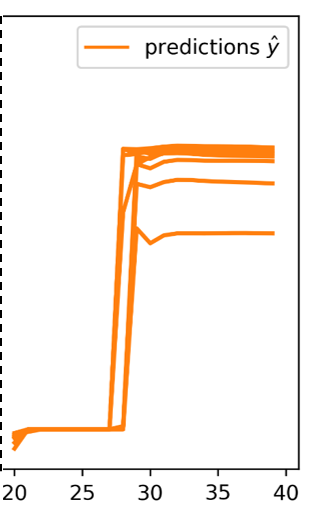}   \\
\hspace{-1.5cm}  (a) True predictive distribution  & \hspace{-0.3cm} (b) Sharp loss ~\cite{leguen19dilate} & \hspace{-0.3cm} (c) deep stoch model \cite{yuan2019diverse} & \hspace{-0.3cm} (d) STRIPE (ours)  
\end{tabular}{}
    \caption{We address the probabilistic time series forecasting problem. (a) Recent deep learning models include a specific loss enabling sharp predictions~\cite{cuturi2017soft,rivest2019new,leguen19dilate,vayer2020time} (b), but are inadequate for producing diverse forecasts. On the other hand, probabilistic forecasting approaches based on generative models \cite{yuan2019diverse,rasul2020multi} loose the ability to generate sharp forecasts (c). The proposed STRIPE model (d) produces both sharp and diverse future forecasts.}
    \label{fig1}
\end{figure}

\section{Related work}
\label{sec:sota}
\paragraph{Deterministic time series forecasting} 
 Traditional time series forecasting methods, including linear autoregressive models such as ARIMA \cite{box2015time} or exponential smoothing \cite{hyndman2008forecasting}, handle linear dynamics and stationary time series (or made stationary by modeling trends and seasonality). Deep learning has become the state-of-the-art for automatically modeling complex long-term dependencies, with many works focusing on architecture design based on temporal convolution networks \cite{borovykh2017conditional,sen2019think}, recurrent neural networks (RNNs) \cite{lai2018modeling,yu2017long,qin2017dual}, or Transformer \cite{vaswani2017attention,li2019enhancing}. Another crucial topic more recently studied in the non-stationary context is the choice of a suitable loss function. As an alternative to the mean squared error (MSE) largely used as a proxy, new differentiable loss functions were proposed to enforce more meaningful criteria such as shape and time \cite{rivest2019new,cuturi2017soft,leguen19dilate,vayer2020time}, \eg soft-DTW based on dynamic time warping \cite{cuturi2017soft,blondel2020differentiable} or the DILATE loss with a soft-DTW term for shape and a smooth temporal distortion index (TDI) \cite{frias2017assessing,vallance2017towards} for accurate temporal localization. These works toward sharper predictions were however only studied in the context of deterministic predictions and not for multiple outcomes.

\paragraph{Probabilistic forecasting}

 For describing the conditional distribution of future values given an input sequence, a first class of deterministic methods add variance estimation with Monte Carlo dropout \cite{zhu2017deep,laptev2017time} or predict the quantiles of this distribution \cite{wen2017multi,gasthaus2019probabilistic,wen2019deep} by minimizing the pinball loss \cite{koenker2001quantile,romano2019conformalized} or the continuous ranked probability score (CRPS) \cite{gneiting2007probabilistic}. Other probabilistic methods try to approximate the predictive distribution, \textit{explicitly} with a parametric distribution (\eg Gaussian for DeepAR \cite{salinas2017deepar} and variants \cite{rangapuram2018deep,salinas2019high}), or \textit{implicitly} with a generative model with latent variables (\eg with conditional variational autoencoders (cVAEs) \cite{yuan2019diverse}, conditional generative adversarial networks (cGANs) \cite{koochali2020if}, normalizing flows \cite{rasul2020multi}). However, these methods lack the ability to produce sharp forecasts by minimizing variants of the MSE (pinball loss, gaussian maximum likelihood), at the exception of cGANs - but which suffer from mode collapse that limits predictive diversity. Moreover, these generative models are generally represented by unstructured distributions in the latent space (\eg Gaussian), which do not allow to have an explicit control on the targeted diversity.

\paragraph{Diverse predictions}
For improving the diversity of predictions, several repulsive schemes were studied such as the variety loss \cite{gupta2018social,thiede2019analyzing} that consists in optimizing the best sample, or entropy regularization terms \cite{dieng2019prescribed,wang2019nonlinear} that encourage a 
uniform distribution and thus more diverse samples. Submodular distribution functions such as determinantal point processes (DPP) \cite{kulesza2012determinantal,robinson2019flexible,mariet2019dppnet} are an appealing probabilistic tool to enforce structured diversity via the choice of a positive semi-definite kernel. DPPs has been successfully applied in various contexts, \eg document summarization \cite{gong2014diverse}, recommendation systems \cite{gillenwater2014expectation}, object detection \cite{azadi2017learning}, and very recently to image generation \cite{elfeki2018gdpp} and diverse trajectory forecasting \cite{yuan2019diverse}. GDPP \cite{elfeki2018gdpp} is based on matching generated and true sample diversity by aligning the corresponding DPP kernels, and thus limits their use in datasets where the full distribution of possible outcomes is accessible. In contrast, our approach is applicable in realistic scenarii where only a single label is available for each training sample. Although we share with~\cite{yuan2019diverse} the goal to use DPP as diversification mechanism, the main limitation in~\cite{yuan2019diverse} is to use the MSE loss for training the prediction and diversification models, leading to blurred prediction, as illustrated in Figure~\ref{fig1}(c). Our approach is able to generate sharp and diverse predictions ; we also highlight the importance in STRIPE to use different criteria for training the prediction model (quality) and the diversification mechanism in order to make them cooperate.

\section{Shape and time diversity for probabilistic time series forecasting}
\label{sec:striate}

We introduce the STRIPE model for including shape and time diversity for probabilistic time series forecasting, which is depicted in Figure \ref{fig2}. Given an input sequence $\x_{1:T}=(\x_1,...,\x_T) \in \mathbb{R}^{p \times T}$, our goal is to sample a set of $N$ diverse and plausible future trajectories  $ \hat{\y}^{(i)} = (\hat{\y}_{T+1},..., \hat{\y}_{T+\tau} )  \in  \mathbb{R}^{d \times \tau} $ from the data future distribution $\hat{\y}^{(i)} \sim p(. |\x_{1:T})$. 

STRIPE builds upon a general Sequence To Sequence (Seq2Seq)
architecture dedicated to multi-step time series forecasting: the input time series $\x_{1:T}$ is fed into an encoder that summarizes the input into a latent vector $h$. Note that our method is agnostic to the specific choice of the forecasting model: it can be a deterministic RNN, or a probabilistic conditional generative model (\eg cVAE \cite{yuan2019diverse}, cGAN \cite{koochali2020if}, normalizing flow \cite{rasul2020multi}). 

For training the predictor (upper part in Figure \ref{fig2}), we concatenate $h$ with a vector $\mathbf{0}_k \in \mathbb{R}^k$ (free space left for the diversifying variables) and a decoder produces a forecasted trajectory $\hat{\y}^{(0)} = (\hat{\y}^{(0)}_{T+1},..., \hat{\y}^{(0)}_{T+\tau} )$. The predictor minimizes a quality loss  $\mathcal{L}_{quality}(\hat{\y}^{(0)},{\y}^{(0)})$ between the predicted $\hat{\y}^{(0)}$ and ground truth future trajectory ${\y}^{(0)}$. In our non-stationary context, we train the STRIPE predictor with $\mathcal{L}_{quality}$ based on the recently proposed DILATE loss \cite{leguen19dilate}, that has proven successful for enforcing sharp predictions with accurate temporal localization.

For introducing structured diversity (lower part in Figure \ref{fig2}), we concatenate $h$ with diversifying latent variables $z \in \mathbb{R}^k$ and produce $N$ future trajectories $\left\{ \hat{\y}^{(i)} \right \}_{i=1,..,N}$. Our key idea is to augment $\mathcal{L}_{quality} (\cdot)$ with a diversification loss $\mathcal{L}_{diversity}(\cdot ; \mathcal{K})$ parameterized by diversity kernel $\mathcal{K}$ and balanced by hyperparameter $\lambda \in \mathbb{R}$, leading to the overall objective training function: 
\begin{equation}
\label{eq:stripe}
    \mathcal{L}_{STRIPE}(\hat{\y}^{(0)},...,\hat{\y}^{(N)} ,{\y}^{(0)}  ; \mathcal{K})  
     = \mathcal{L}_{quality}(\hat{\y}^{(0)},\y^{(0)} ) + \lambda \; \mathcal{L}_{diversity}(\hat{\y}^{(1)},..., \hat{\y}^{(N)} ; \mathcal{K})
\end{equation}

We highlight that STRIPE is applicable with a single target trajectory $\mathbf{y}^{(0)}$, \ie we do not require the full trajectory distribution. We now detail how the $\mathcal{L}_{diversity}(\cdot ; \mathcal{K})$ loss is designed to ensure diverse shape and time predictions.

\begin{figure}
    \centering
    \includegraphics[width=\linewidth]{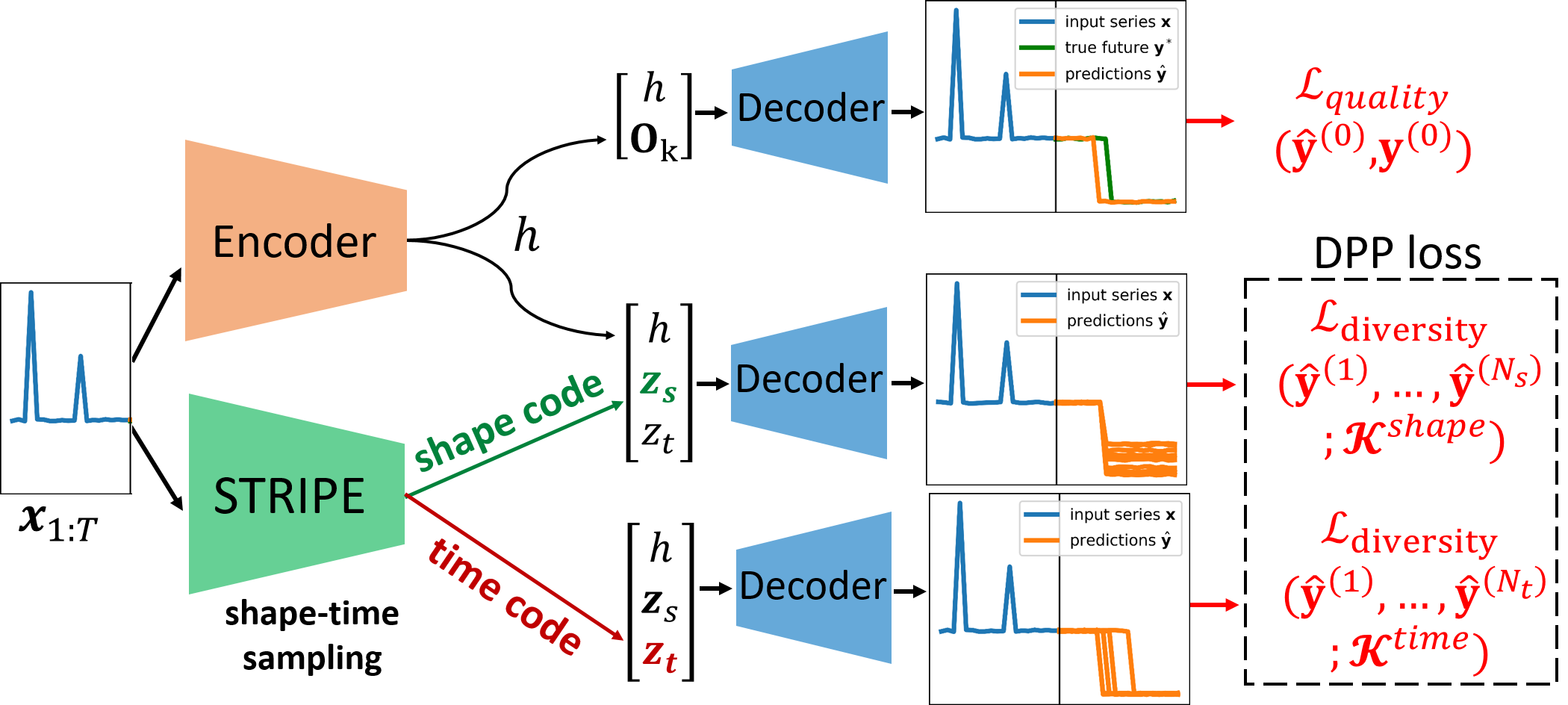}
    \vspace{-0.3cm}
    \caption{Our STRIPE model builds upon a Seq2Seq architecture trained with a quality loss $\mathcal{L}_{quality}$ enforcing sharp predictions. Our contributions rely on the design of a diversity loss $\mathcal{L}_{diversity}$ based on a specific Determinantal Point Processes (DPP). We design admissible shape and time DPP kernels, \ie positive semidefinite, and differentiable for end-to-end training with deep models (section
   ~\ref{sec31}). We also introduce an iterative DDP sampling mechanism to generate disentangled latent codes between shape and time, supporting the use of different criteria for diversity and quality (section \ref{sec32}).} 
    \label{fig2}
\end{figure}

\subsection{STRIPE diversity module based on determinantal point processes}
\label{sec31}

Our $\mathcal{L}_{diversity}$ loss relies on determinantal point processes (DPP) that are a convenient probabilistic tool for enforcing structured diversity via adequately chosen positive semi-definite kernels. For comparing two time series $\mathbf{y}_1$ and $\mathbf{y}_2$, we introduce the two following 
kernels $\mathcal{K}^{shape}$ and $\mathcal{K}^{time}$, for finely controlling the shape and temporal diversity: 
\begin{align}
\mathcal{K}^{shape}(\mathbf{y_1},\mathbf{y_2}) &= e^{-\gamma \: \text{DTW}_{\gamma}^{\mathbf{\Delta}}(\mathbf{y_1},\mathbf{y_2})} \label{eq:kshape} \\
\mathcal{K}^{time}(\mathbf{y_1},\mathbf{y_2}) &= e^{- \text{DTW}^{\mathbf{\Delta}}_{\gamma}(\mathbf{y_1},\mathbf{y_2}) / \gamma}
  \times \text{TDI}^{\mathbf{\Delta, {\Omega_{sim}}}}_{\gamma} (\mathbf{y_1},\mathbf{y_2})  \label{eq:ktime}
\end{align}

where $\text{DTW}_{\gamma}(\mathbf{y_1},\mathbf{y_2}) := - \gamma  \log \left ( \sum_{\mathbf{A} \in \mathcal{A}_{\tau,\tau}} \exp ^{ -\frac{ \left \langle \mathbf{A},\mathbf{\Delta}(\mathbf{y_1},\mathbf{y_2}) \right \rangle}{\gamma} } \right )$ is a smooth relaxation of Dynamic Time Warping (DTW)~\cite{cuturi2017soft}, and $\mathcal{K}^{time}$ corresponds to a smooth Temporal Distortion Index (TDI) \cite{frias2017assessing,leguen19dilate}: $\gamma>0$ denotes the smoothing coefficient, $\mathbf{A} \subset \left\{ 0,1 \right\}^{\tau \times \tau}$ is a warping path between two time series of length $\tau$, $\mathcal{A}_{\tau,\tau}$ is the set of all feasible warping paths and $\mathbf{\Delta}(\mathbf{y_1},\mathbf{y_2}) = [\delta((\mathbf{y_1})_i,(\mathbf{y_2})_j)]_{1 \leq i,j \leq \tau}$ is a pairwise cost matrix between time steps of both series with similarity measure $\delta: \mathbb{R}^d \times \mathbb{R}^d \rightarrow \mathbb{R}$, $\mathbf{\Omega_{sim}}$ is a $\tau \times \tau$ similarity matrix (\eg  $\mathbf{\Omega_{sim}}(i,j) = \frac{1}{(i-j)^2+1}$) and $Z$ is the partition function. These kernels are derived from the two components of the DILATE loss \cite{leguen19dilate} ; however in contrast to the deterministic nature of DILATE, they are used in a probabilistic context for producing sharp and diverse forecasts.

$\mathcal{K}^{shape}$ and $\mathcal{K}^{time}$ are differentiable by design\footnote{In the limit case $\gamma \to 0$, $\text{DTW}_{\gamma}$ (resp. $\text{TDI}_{\gamma}$) recovers the standard DTW (resp. TDI).}, making them suitable for end-to-end training with back-propagation. We also derive the key following result for ensuring the submodularity properties of DPPs, that we prove in supplementary 1:

\begin{prop}
\label{prop:psd}
Providing that $k$ is a positive semi-definite (PSD) kernel such that $\frac{k}{1+k}$ is also PSD, if we define the cost matrix $\Delta$ with general term $\delta(y_i,y_j) = -\gamma \log k(y_i,y_j)$, then  $\mathcal{K}^{shape}$ and $\mathcal{K}^{time}$ defined respectively in Equations (\ref{eq:kshape}) and (\ref{eq:ktime}) are PSD kernels.
\end{prop}

In practice, we choose $k(u,v)= \frac{1}{2} e^{-\frac{(u-v)^2}{\sigma^2}} ( 1-\frac{1}{2} e^{-\frac{(u-v)^2}{\sigma^2}})^{-1}$ that fullfills Prop~\ref{prop:psd} requirements.

\paragraph{DPP diversity loss} We combine the two differentiable PSD kernels $\mathcal{K}^{shape}$ and $\mathcal{K}^{time}$ with the DPP diversity loss from \cite{yuan2019diverse} defined as the negative expected cardinality of a random subset $Y$ (of a ground set $\mathcal{Y}$ of $N$ items) sampled from the DPP of kernel $\mathcal{K}$ (denoted as $\mathbf{K}$ in matrix form of shape $N\times N$). This loss is differentiable and can be efficiently computed in closed-form: 
\begin{equation}
    \mathcal{L}_{diversity}(\mathcal{Y} ; \mathbf{K}) = -\mathbb{E}_{Y \sim \text{DPP}(\mathbf{K})} |Y| = - Trace(\mathbf{I}-(\mathbf{K}+\mathbf{I})^{-1})
    \label{eq:ldiversity}
\end{equation}{}
Intuitively, a larger expected cardinality means a more diverse sampled set according to kernel $\mathcal{K}$. We provide more details on DPPs and the derivation of $ \mathcal{L}_{diversity}$ in supplementary 2.

\begin{figure}[t]
    \centering
    \includegraphics[width=\linewidth]{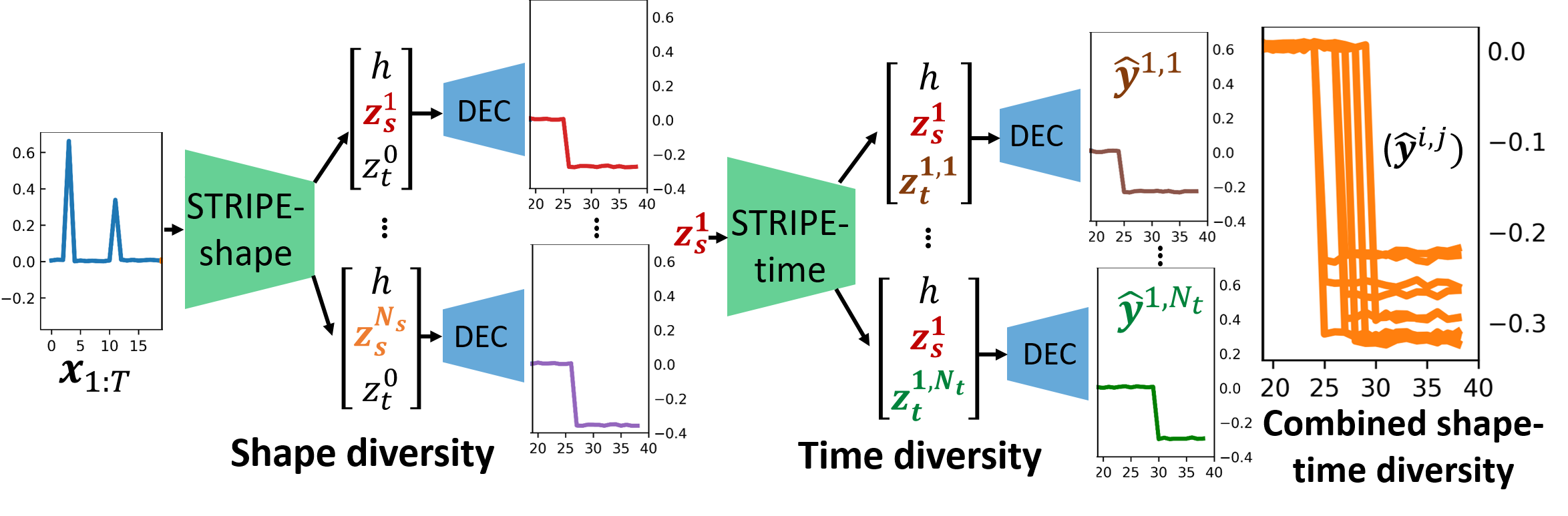}
    \vspace{-0.7cm}
    \caption{At test time, STRIPE sequential shape and time sampling scheme that leverages the disentangled latent space. STRIPE-shape first proposes diverse shape latent variables. For each generated shape, STRIPE-time further enhances its temporal variability, leading to a final set of accurate predictions with shape and time diversity.}
    \label{fig3}
    \vspace{-0.3cm}
\end{figure}

\subsection{STRIPE learning and sequential shape and temporal diversity sampling}
\label{sec32}

To maximize shape and time diversity with Eq (\ref{eq:stripe}) and (\ref{eq:ldiversity}), a naive way is to consider the combined kernel $\mathcal{K}^{shape} + \mathcal{K}^{time}$  which is also PSD. However, this reduces to using the same criterion for quality and diversity, \ie DILATE~\cite{leguen19dilate}. This directly makes $\mathcal{L}_{diversity}$ conflicts with $\mathcal{L}_{quality}$ and harms prediction performances, as shown in ablation studies (section \ref{sec:ablation}). Another simple solution is to diversify using $\mathcal{K}^{shape}$ and $\mathcal{K}^{time}$ independently, which prevents from modeling joint shape and time variations, and intrinsically limits the expressiveness of the diversification scheme. In contrast, we propose a sequential shape and temporal diversity sampling scheme, which enables to jointly model variations in shape and time without altering prediction quality. We now detail how the STRIPE models are trained and then used at test time.

\paragraph{STRIPE-shape and STRIPE-time learning} We start by independently training two proposal modules STRIPE-shape and STRIPE-time (and their respective encoders and decoders) by optimizing Eq (\ref{eq:stripe}) with $\mathcal{L}_{STRIPE}(\cdot ; \mathcal{K}^{shape})$ (resp. $\mathcal{L}_{STRIPE}(\cdot ; \mathcal{K}^{time})$). To this end, we complement the latent state $h$ of the forecaster with a diversifying latent variable $z \in \mathbb{R}^k$ decomposed into shape $z_s \in \mathbb{R}^{k/2}$ and temporal $z_t \in \mathbb{R}^{k/2}$ components: $z=(z_s,z_t) \in \mathbb{R}^k$. 
 As illustrated in Figure \ref{fig3}, STRIPE-shape (the description of STRIPE-time is symmetric) is a proposal neural network that produces $N_s$ different shape latent codes $z_s^{(i)}$ (the output of the STRIPE-shape neural network is of shape $N_s \times k$). The decoder takes the concatenated state $(h,z_s^{(i)},z_t)$  for a fixed $z_t$ and produces $N_s$ future trajectories $\hat{\y}^{(i)}$, whose diversity is maximized with $\mathcal{L}_{diversity}(\hat{\y}^{(1)},...,\hat{\y}^{(N_s)} ; \mathbf{K}^{shape})$ in Eq (\ref{eq:ldiversity}). The architecture of STRIPE-time is similar to STRIPE-shape, except that the proposal neural network is conditioned on a generated shape variable $z_s^{(i)}$ to produce temporal variability with respect to a given shape.

\begin{minipage}[t]{0.59\textwidth}
\paragraph{Sequential sampling at test time} Once the STRIPE-shape and STRIPE-time models (and their corresponding encoders and decoders) are learned, test-time sampling (illustrated in Figure \ref{fig3} and detailed in Algorithm \ref{alg:sampling}) consists in sequentially maximizing the shape diversity with STRIPE-shape (different guesses about the step amplitude in Figure \ref{fig3}) and  the temporal diversity of each shape with STRIPE-time (the temporal localization of the step).

Notice that the ordering shape+time is actually important since the notion of time diversity between two time series is only meaningful if they have a similar shape (so that computing the DTW optimal path has a sense): the STRIPE-time proposals are conditioned on the shape proposals from the previous step. As shown in our experiments, this two-steps scheme (denoted STRIPE S+T) leads to more diverse predictions with both shape and time criteria compared to using the shape or time kernels alone.

\end{minipage}
\hspace{0.2cm}\begin{minipage}[t]{0.38\textwidth}
\vspace{-0.3cm}
\begin{algorithm}[H]
\SetAlgoLined
 Sample an initial $z_t^{(0)} \sim \mathcal{N}(0,\mathbf{I})$ \\
 $z_s^{(1)},...,z_s^{(N_{s})} = \text{STRIPE-shape}(\x_{1:T} )$                                                                                      \\
\For{i=1..$N_{s}$}{
 $z_t^{(i,1)},...,z_t^{(i,N_{t})} = \text{STRIPE-time}(\x_{1:T}, z_s^{(i)} )$ \\
  \For{j=1..$N_{t}$}{
  $\hat{\mathbf{y}}^{(i,j)}_{T+1:t+\tau} = Decoder(\mathbf{x}_{1:T}, (z_s^{(i)},z_t^{(i,j)}) )$ \\ 
  }
 }
 \caption{STRIPE S+T sampling at test time}
 \label{alg:sampling}
\end{algorithm}
\end{minipage}

\section{Experiments}
\label{sec:expes}

To illustrate the relevance of STRIPE, we carry out experiments in two different settings: in the first one, we compare the ability of forecasting methods to capture the full predictive distribution of future trajectories on a synthetic dataset with multiple possible futures for each input. To validate our approach in realistic settings, we evaluate STRIPE on 2 standard real datasets (traffic \& electricity) where we evaluate the best (resp. the mean) sample metrics as a proxy for diversity (resp. quality). 

\textbf{Implementation details:} To handle the inherent ambiguity of the synthetic dataset (multiple targets for one input), our STRIPE model is based on a natively stochastic model (cVAE). Since this situation does not arise exactly for real-world datasets, we choose in  this case a deterministic Seq2Seq predictor with 1 layer of 128 Gated Recurrent Units (GRU) \cite{cho2014learning}. In our experiments, all methods produce N=10 future trajectories that are compared to the unique (or multiple) ground truth(s). For a fair comparison, STRIPE S+T generates $N_s \times N_t =10\times10=100$ predictions and we randomly sample N=10 predictions for evaluation. Further neural network architectures and implementation details are described in supplementary 3.1. Our PyTorch code implementing STRIPE is available at \url{https://github.com/vincent-leguen/STRIPE}.


\subsection{Synthetic dataset with multiple futures}
\label{sec:synthetic}

We use a synthetic dataset similar to \cite{leguen19dilate} that consists in predicting step functions based on a two-peaks input signal (see Figure \ref{fig1}). For each input series of 20 timesteps, we generate 10 different future series of length 20 by adding noise on the step amplitude and localisation. The dataset is composed of $100 \times 10=1000$ time series for each train/valid/test split (further dataset description in supplementary 3.1).

\textbf{Metrics:} In this multiple futures context, we define two specific discrepancy measures $H_{quality}(\ell)$ and $H_{diversity}(\ell)$ for assessing the divergence between the predicted and true distributions of futures trajectories for a given loss $\ell$ ($\ell = $ MSE or DILATE in our experiments):

\begin{adjustbox}{max width=\linewidth}
\begin{tabular}{cc}
$
    H_{quality}(\ell) = \mathbb{E}_{\x \in \mathcal{D}_{test}} \mathbb{E}_{\hat{\y}}  \left[  \underset{\y \in F(\x)}{\inf} \: \ell(\hat{\y},\y) \right]
$   
 &
$
        H_{diversity}(\ell) = \mathbb{E}_{\x \in \mathcal{D}_{test}} \mathbb{E}_{\y \in F(\x)}  \left[  \underset{\hat{\y}}{\inf} \: \ell(\hat{\y},\y) \right] 
$
\end{tabular}
\end{adjustbox}

$\text{H}_{quality}$ penalizes forecasts $\hat{\y}$ that are far away from a ground truth future of $\x$ denoted $\y \in F(\x)$ (similarly to the precision concept in pattern recognition) whereas $\text{H}_{diversity}$ penalizes when a true future is not covered by a forecast (similarly to recall). We also use the continuous ranked probability score (CRPS)\footnote{An intuitive definition of the CRPS is the pinball loss integrated over all quantile levels. The CRPS is minimized when the predicted future distribution is identical to the true future distribution.}  which is a standard \textit{proper scoring rule} \cite{gneiting2007probabilistic} for assessing probabilistic forecasts \cite{gasthaus2019probabilistic}.

\begin{table}[H]
 
\vspace{-0.2cm}
        \caption{Forecasting results on the synthetic dataset with multiple futures for each input, averaged over 5 runs (mean $\pm$ standard deviation). Best equivalent method(s) (Student t-test) shown in bold. Metrics are scaled (MSE $\times$ 1000, DILATE $\times 100$, CRPS $\times$ 1000) for readability. \label{table1}  }   
    \begin{adjustbox}{max width=\linewidth}        
    \begin{tabular}{cccccccc}
       \toprule 
   \multicolumn{1}{c}{}  & \multicolumn{2}{c}{$\text{H}_{quality} \; (.) (\downarrow)$} & \multicolumn{2}{c}{$\text{H}_{diversity}(.) \; (\downarrow)$} & \multicolumn{1}{c}{CRPS ($\downarrow$)} \\ 
       \cmidrule(lr){2-3}   \cmidrule(lr){4-5} 
 Methods   & MSE   & DILATE  & MSE  & DILATE \\ 
        \midrule
 Deep AR \cite{salinas2017deepar} & 26.6 $\pm$ 6.4 & 67.0 $\pm$ 12.0 & \textbf{15.2 $\pm$ 3.4} & 45.4 $\pm$ 4.3 & 62.4 $\pm$ 9.9 \\ \hline
   cVAE MSE &  11.8 $\pm$ 0.5 & 48.8 $\pm$ 3.2 & 20.0 $\pm$ 0.6 & 85.4 $\pm$ 7.0 & 76.4 $\pm$ 3.0 \\
   variety loss \cite{thiede2019analyzing} MSE & 13.1 $\pm$ 2.7 & 50.9 $\pm$ 4.7 & 19.6 $\pm$ 1.1 & 84.7 $\pm$ 2.2 & 80.1 $\pm$ 3.3 \\
     Entropy regul. \cite{dieng2019prescribed} MSE   & 12.0 $\pm$ 0.7 & 51.5 $\pm$ 2.9  & 19.7 $\pm$ 0.7   & 89.5 $\pm$ 7.4 & 78.9 $\pm$ 2.9  \\   
   Diverse DPP \cite{yuan2019diverse} MSE & 15.9 $\pm$ 2.6 & 56.6 $\pm$ 2.8 & 16.5 $\pm$ 1.5 & 59.6 $\pm$ 5.6 & 80.5 $\pm$ 6.1 \\ 
    GDPP \cite{elfeki2018gdpp} kernel MSE  & 11.7 $\pm$ 1.3 & 47.5 $\pm$ 3.1 & 19.5 $\pm$ 0.4 & 82.3 $\pm$ 5.2 & 74.0 $\pm$ 4.5 \\
    \textbf{STRIPE S+T (ours)} & 12.4 $\pm$ 1.0 & 48.7 $\pm$ 0.7 & 18.1 $\pm$ 1.6 & 62.0 $\pm$ 5.4 & 72.2 $\pm$ 3.1 \\
    \hline
  cVAE DILATE   & 11.6 $\pm$ 1.8 & \textbf{28.3 $\pm$ 2.9} &  22.2
  $\pm$ 2.5   &   67.8 $\pm$ 7.8 & 62.2 $\pm$ 4.2  \\
   variety loss \cite{thiede2019analyzing} DILATE  & 14.9 $\pm$ 3.3 & 33.5 $\pm$ 1.9 & 23.8 $\pm$ 3.9   & 61.6 $\pm$ 1.9 & 62.6 $\pm$ 3.0  \\
    Entropy regul. \cite{dieng2019prescribed} DILATE  & 12.7 $\pm$ 2.6 & \textbf{29.9 $\pm$ 3.2} & 23.5 $\pm$ 2.6 & 65.1 $\pm$ 4.5 & 62.4 $\pm$ 3.9   \\
  Diverse DPP \cite{yuan2019diverse} DILATE  & \textbf{11.1 $\pm$ 1.6}  & 30.2 $\pm$ 2.9 & 20.7 $\pm$ 2.3 & 62.6 $\pm$ 11.3 & \textbf{60.7 $\pm$ 1.6} \\
 GDPP  \cite{elfeki2018gdpp} kernel DILATE   &  \textbf{10.6 $\pm$ 1.6} &    \textbf{28.7 $\pm$ 4.1} & 21.7 $\pm$ 2.1  &   47.7 $\pm$ 9.0  & 63.4 $\pm$ 6.4  \\
 \textbf{STRIPE S+T (ours)}   & \textbf{10.8 $\pm$ 0.4}  & \textbf{30.7 $\pm$ 0.9} & 
\textbf{14.5 $\pm$ 0.6}  & \textbf{35.5 $\pm$ 1.1} & \textbf{60.5 $\pm$ 0.4} \\ 
\bottomrule
    \end{tabular}
    \end{adjustbox}
\end{table}{}

\vspace{-0.2cm}

\paragraph{Results} We compare our method to 4 recent competing diversification mechanisms (variety loss \cite{thiede2019analyzing}, entropy regularisation \cite{dieng2019prescribed}, diverse DPP \cite{yuan2019diverse} and GDPP \cite{elfeki2018gdpp}) based two different forecasting backbones: a conditional variational autoencoder (cVAE) trained with MSE and with DILATE. Results in Table \ref{table1} show that our model STRIPE S+T based on a cVAE DILATE obtains the global best performances by improving the diversity by a large margin ($\text{H}_{diversity}(\text{DILATE)}= $ 35.5 \vs 67.8), significantly outperforming other methods. This highlights the relevance of the structured shape and time diversity in STRIPE. It is worth mentioning that STRIPE also presents the best performances in quality. In contrast, other diversification mechanisms (variety loss, entropy regularisation, diverse DPP) based on the same backbone (cVAE DILATE) improve the diversity in DILATE but at the cost of a loss in quality in MSE and/or DILATE. Although GDPP does not deteriorate quality, it is significantly worse than STRIPE in diversity, and the approach requires full future distribution supervision, which it not applicable in in real dataset (see section \ref{sec:sota}). 

Similar conclusions can be drawn for the cVAE MSE backbone: the different diversity mechanisms improve the diversity but at the cost of a loss of quality. For example, Diverse DPP MSE \cite{yuan2019diverse} improves diversity ($\text{H}_{diversity}(\text{DILATE})=$ 59.6 \vs 85.4) but looses in quality ($\text{H}_{quality}(\text{DILATE})=$ 56.6 \vs 48.8). In contrast, STRIPE S+T again both improves diversity ($\text{H}_{diversity}(\text{DILATE})=$ 62.0 \vs 85.4) with equivalent quality  ($\text{H}_{quality}(\text{DILATE})=$ 48.7 \vs 48.8). We further highlight that STRIPE S+T gets the best results evaluated in CPRS, confirming its ability to better recover the true future distribution.

\begin{table}[b]
        \caption{Ablation study on the synthetic dataset. We train a backbone cVAE with the DILATE quality loss and compare different DPP kernels for diversity. Metrics are scaled for readability. Results averaged over 5 runs (mean $\pm$ std). Best equivalent method(s) (Student t-test) shown in bold.}
    \begin{adjustbox}{max width=\linewidth}
    \begin{tabular}{cccccccc}
    \toprule
    \multicolumn{1}{c}{cVAE DILATE} & \multicolumn{2}{c}{$\text{H}_{quality}(.) \; (\downarrow)$} & \multicolumn{4}{c}{$\text{H}_{diversity}(.) \; (\downarrow)$} &  \multicolumn{1}{c}{CRPS ($\downarrow$)}  \\ 
       \cmidrule(lr){2-3}   \cmidrule(lr){4-7} 
   diversity & MSE & DILATE & MSE & DTW & TDI & DILATE &     \\ \hline 
 None &  11.6 $\pm$ 1.8 & \textbf{28.3 $\pm$ 2.9} &  22.2
  $\pm$ 2.5 &   18.8 $\pm$ 1.3 & 48.6  $\pm$ 2.2  &   67.8 $\pm$ 7.8  &  62.2 $\pm$ 4.2 \\
 DILATE &   \textbf{11.1 $\pm$ 1.6} &   \textbf{30.2 $ \pm$ 2.8} &   20.7 $\pm$ 2.3 & 
    18.6 $\pm$ 1.6 &   42.8 $\pm$ 10.2 &  62.6 $\pm$ 11.3 & 60.7 $\pm$ 1.7 \\  
 MSE &  \textbf{10.9 $\pm$ 1.5} & \textbf{30.2 $\pm$ 2.9} &   20.1 $\pm$ 2.2 &   18.5 $\pm$ 1.3 & 41.9 $\pm$ 8.8 & 61.7 $\pm$ 9.5 & 62.1 $\pm$ 0.9\\
 shape (ours) &  \textbf{11.0 $\pm$ 1.4} & \textbf{30.2 $\pm$ 1.2} & 15.5 $\pm$ 1.04 & \textbf{16.4 $\pm$ 1.5}  & 
15.4 $\pm$ 4.2  & 37.8 $\pm$ 3.7 & 63.2 $\pm$ 1.6  \\
time (ours) &   11.9 $\pm$ 0.5 &  \textbf{31.2 $\pm$ 1.3} &  16.1 $\pm$ 0.70 & 
  17.6 $\pm$ 0.5 &  \textbf{15.1 $\pm$ 3.1}  &  38.9 $\pm$ 3.3 & 62.3 $\pm$ 1.4 \\
\textbf{S+T} (ours) & \textbf{10.8 $\pm$ 0.4}  & \textbf{30.7 $\pm$ 0.9} & 
\textbf{14.5 $\pm$ 0.6}  & \textbf{16.1 $\pm$ 1.1} & \textbf{13.2 $\pm$ 1.7} & \textbf{35.5 $\pm$ 1.1} & \textbf{60.5 $\pm$ 0.4} \\ 
\bottomrule
    \end{tabular}
    \end{adjustbox}
    \label{tab:ablation}    
    \vspace{-0.3cm}
\end{table}{}

\vspace{-0.1cm}
\subsection{Ablation study}
\label{sec:ablation}

To analyze the respective roles of the quality and diversity losses, we perform an ablation study on the synthetic dataset with the cVAE backbone trained with the quality loss DILATE and different DPP diversity losses. For a finer analysis, we report in Table \ref{tab:ablation} the shape (DTW, computed with Tslearn \cite{tavenard2020tslearn}) and time (TDI) components of the DILATE loss \cite{leguen19dilate}.

Results presented in Table \ref{tab:ablation} first reveal the crucial importance to define different criteria for quality and diversity. With the same loss for quality and diversity (as this is the case in \cite{yuan2019diverse}), we observe here that the DILATE DPP kernel does not bring a statistically significant diversity gain compared to the cVAE DILATE baseline (without diversity loss). By choosing the MSE kernel instead, we even get a small diversity and quality improvement.
 
 In contrast, our introduced shape and time kernels $\mathcal{K}^{shape}$ and  $\mathcal{K}^{time}$ largely improve the diversity in DILATE without deteriorating precision. As expected, each kernel brings its own benefits:  $\mathcal{K}^{shape}$ brings the best improvement in the shape metric DTW ($\text{H}_{diversity}(\text{DTW)}=$ 16.4 \vs 18.8) and $\mathcal{K}^{shape}$ the best improvement in the time metric TDI ($\text{H}_{diversity}(\text{TDI)}=$ 15.1 \vs 48.6). With our sequential shape and time sampling sheme described in section \ref{sec32}, STRIPE S+T gathers the benefits of both criteria and gets the global best results in diversity and equivalent results in quality.

\begin{table}[b]
\vspace{-0.4cm}
               \caption{Forecasting results on the Traffic and Electricity datasets, averaged over 5 runs (mean $\pm$ std). Metrics are scaled for readability. Best equivalent method(s) (Student t-test) shown in bold.} 
\setlength{\tabcolsep}{6.8pt}
    \begin{adjustbox}{max width=\linewidth}
    \begin{tabular}{ccccc|cccc}
    \toprule
    \multicolumn{1}{c}{} & \multicolumn{4}{c|}{Traffic} & \multicolumn{4}{c}{Electricity}   \\
    \multicolumn{1}{c}{} & \multicolumn{2}{c}{MSE} &  \multicolumn{2}{c|}{DILATE}   & \multicolumn{2}{c}{MSE} &  \multicolumn{2}{c}{DILATE}  \\ 
         \cmidrule(lr){2-3}   \cmidrule(lr){4-5}      \cmidrule(lr){6-7}   \cmidrule(lr){8-9} 
 Method   & mean  & best & mean & best   & mean & best & mean & best   \\ \hline
 Nbeats \cite{oreshkin2019n} MSE  &  - & 7.8 $\pm$ 0.3 & - & 22.1 $\pm$ 0.8  & - & 24.6 $\pm$ 0.9 & - & 29.3 $\pm$ 1.3    \\
 Nbeats \cite{oreshkin2019n} DILATE  & - & 17.1 $\pm$ 0.8 & - & 17.8 $\pm$ 0.3  & - & 38.9 $\pm$ 1.9 & - & 20.7 $\pm$ 0.5 \\
   Deep AR \cite{salinas2017deepar} & 15.1 $\pm$ 1.7 & \textbf{6.6 $\pm$ 0.7} & 30.3 $\pm$ 1.9 & 16.9 $\pm$ 0.6  &   67.6 $\pm$ 5.1  & 25.6 $\pm$ 0.4 & 59.8 $\pm$ 5.2  & 17.2 $\pm$ 0.3    \\
 \hline
   cVAE DILATE & \textbf{10.0 $\pm$ 1.7}  & 8.8 $\pm$ 1.6  & \textbf{19.1 $\pm$ 1.2} & 17.0 $\pm$ 1.1   &  \textbf{28.9 $\pm$ 0.8}    &  27.8 $\pm$ 0.8 & 24.6 $\pm$ 1.4   & 22.4 $\pm$ 1.3   \\
   Variety loss \cite{thiede2019analyzing} & \textbf{9.8 $\pm$ 0.8} & 7.9 $\pm$ 0.8  & \textbf{18.9 $\pm$ 1.4}  & 15.9 $\pm$ 1.2  & 29.4 $\pm$ 1.0  & 27.7 $\pm$ 1.0  & 24.7 $\pm$ 1.1  & 21.6 $\pm$ 1.0    \\
   Entropy regul. \cite{dieng2019prescribed} & 11.4 $\pm$ 1.3   & 10.3 $\pm$ 1.4  & \textbf{19.1 $\pm$ 1.4}  & 16.8 $\pm$ 1.3  & 34.4 $\pm$ 4.1  &  32.9 $\pm$ 3.8  & 29.8 $\pm$ 3.6  & 25.6 $\pm$ 3.1  \\
   Diverse DPP \cite{yuan2019diverse}  & 11.2 $\pm$ 1.8  & 6.9 $\pm$ 1.0   & 20.5 $\pm$ 1.0  & 14.7 $\pm$ 1.0   &  31.5 $\pm$ 0.8  & 25.8 $\pm$ 1.3  & 26.6 $\pm$ 1.0  & 19.4 $\pm$ 1.0    \\ 
  \textbf{STRIPE S+T} & \textbf{10.1 $\pm$ 0.4}   &  \textbf{6.5 $\pm$ 0.2}  & \textbf{19.2 $\pm$ 0.8}  & \textbf{14.2 $\pm$ 0.2}   & 29.7 $\pm$ 0.3  & \textbf{23.4 $\pm$ 0.2}  & \textbf{24.4  $\pm$ 0.3}  & \textbf{16.9 $\pm$ 0.2}   \\ 
  \bottomrule
    \end{tabular}
    \end{adjustbox}
    \label{table3}    
\end{table}{}

\vspace{-0.1cm}
\subsection{State-of-the-art comparison on real-world datasets}

We evaluate here the performances of STRIPE on two challenging real-world datasets commonly used as benchmarks in the time series forecasting literature \cite{yu2016temporal,salinas2017deepar,lai2018modeling,rangapuram2018deep,leguen19dilate,sen2019think}: \textbf{Traffic}: consisting in hourly road occupancy rates (between 0 and 1) from the California Department of Transportation, and \textbf{Electricity}: consisting in hourly electricity consumption measurements (kWh) from 370 customers. For both datasets, models predict the 24 future points given the past 168 points (past week). Although these datasets present daily, weakly, yearly periodic patterns, we are more interested here in modeling finer intraday temporal scales, where these signals present sharp fluctuations that are crucial for many applications, \eg short-term renewable energy forecasts for load adjustment in smart-grids \cite{leguen-fisheye}.

Contrary to the synthetic dataset, we only dispose of one future trajectory sample $\y^{(0)}_{T+1:T+\tau}$ for each input series $\x_{1:T}$. In this case, the metrics $\text{H}_{quality}$ (resp. $\text{H}_{diversity}$) defined in section \ref{sec:synthetic} reduce to the mean sample (resp. best sample), which are common for evaluating stochastic forecasting models \cite{babaeizadeh2017stochastic,franceschi2020stochastic}. We also report the CRPS in supplementary 3.2.

\begin{figure}[t]
    \centering
    \begin{tabular}{cc}
    \hspace{-0.5cm}
    \includegraphics[width=6.5cm]{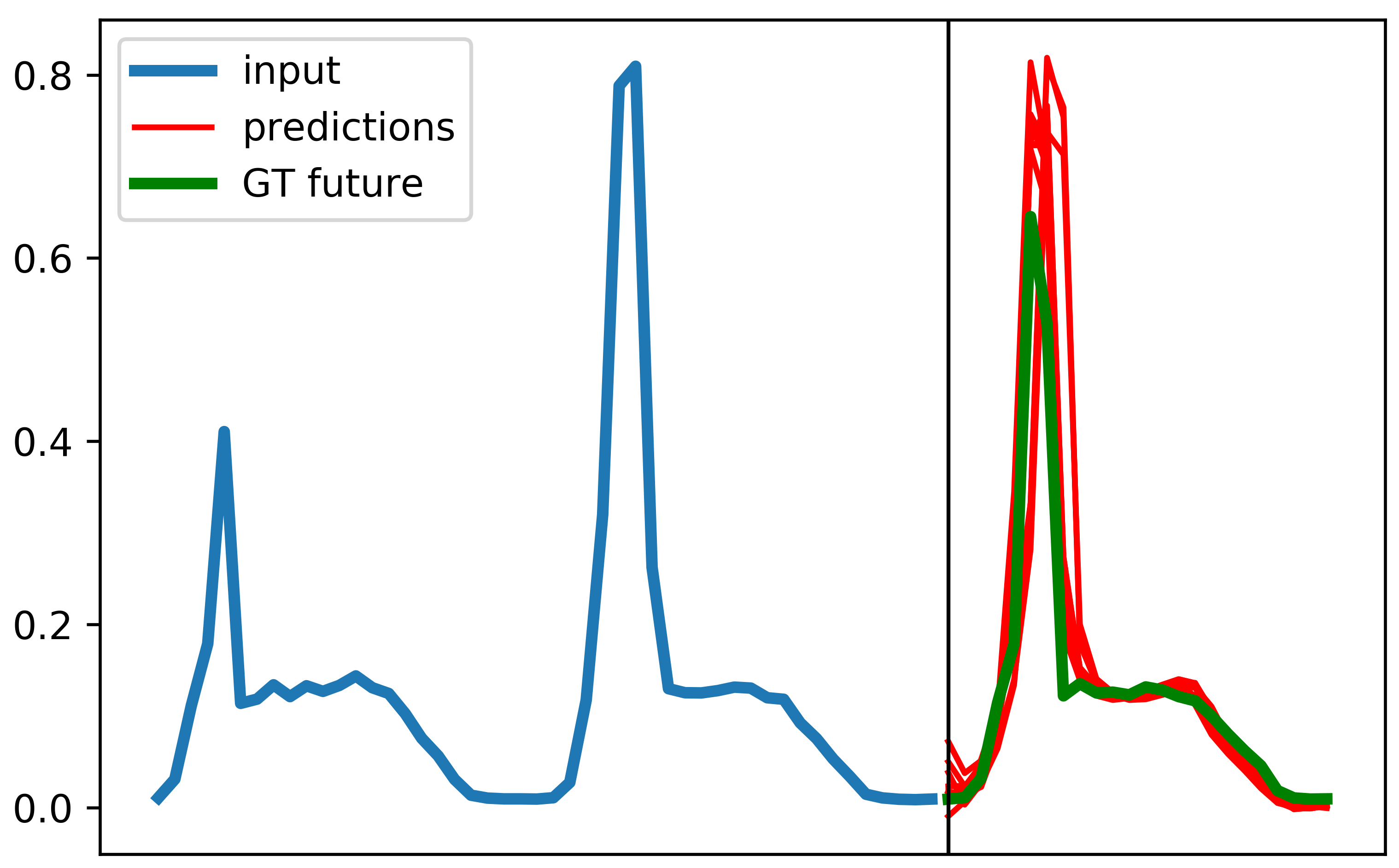}     & 
    \includegraphics[width=6.5cm]{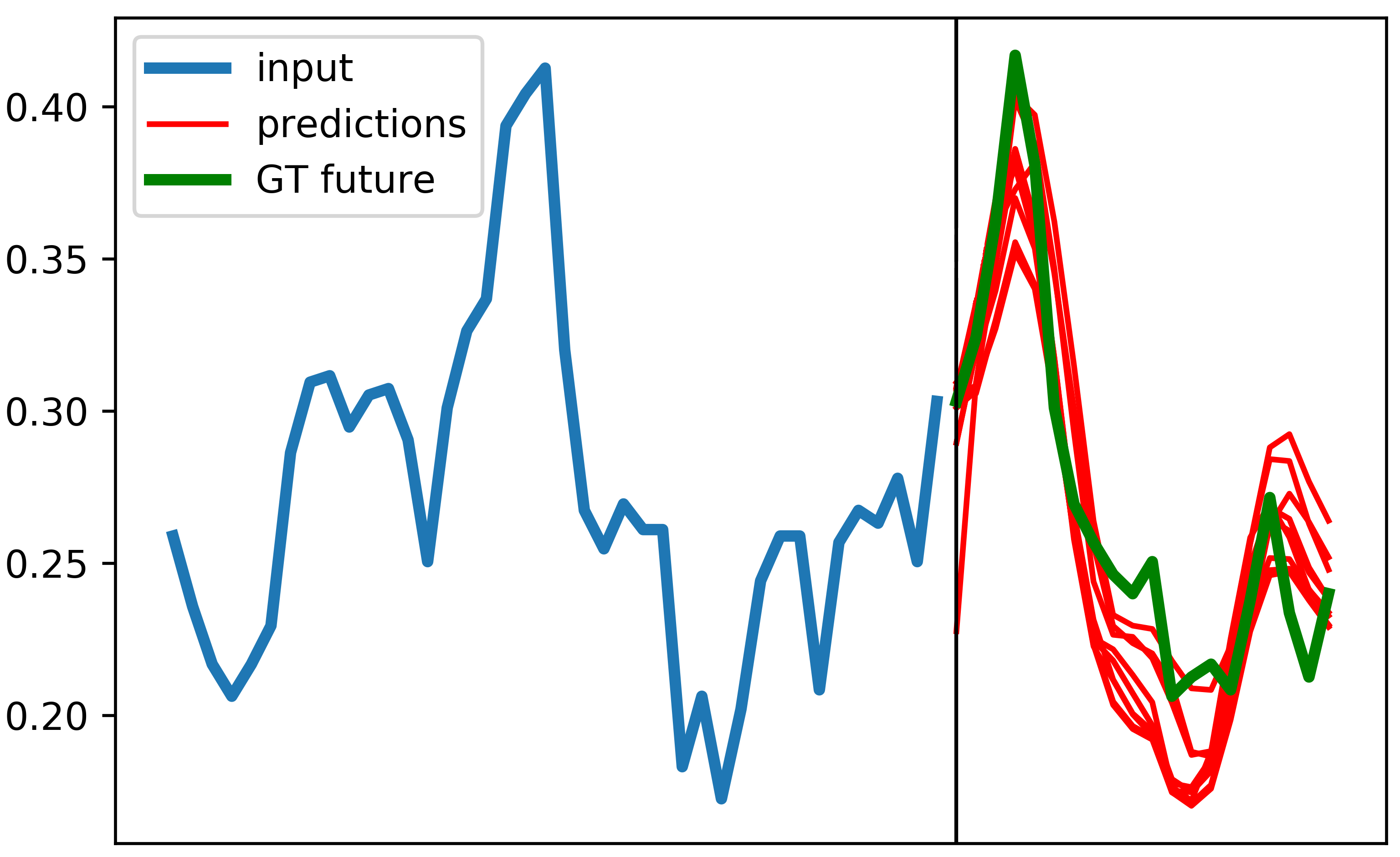} \\

   Traffic &    Electricity
    \end{tabular}
    \vspace{-0.2cm}
    \caption{Qualitative predictions for Traffic and Electricity datasets. Input series in blue are not shown entirely for readability. We display 10 future predictions of STRIPE S+T that are both sharp and accurate compared to the ground truth (GT) future in green.}
    \label{fig:visus}
    \vspace{-0.3cm}
\end{figure}

Results in Table \ref{table3} reveal that STRIPE S+T outperforms all other methods in terms of the best sample trajectory evaluated in MSE and DILATE for both datasets, while being equivalent in the mean sample in 3/4 cases. Interestingly, STRIPE S+T provides better best trajectories (evaluated in MSE and DILATE) than the recent state-of-the-art N-Beats algorithm \cite{oreshkin2019n} (either trained with MSE or DILATE), which is dedicated to producing high quality deterministic forecasts. This confirms that STRIPE's structured quality and diversity framework enables to obtain very accurate best predictions. Finally when compared to the state-of-the art probabilistic deep AR method \cite{salinas2017deepar}, STRIPE S+T is consistently better in diversity and quality.

We display a few qualitative forecasting examples of STRIPE S+T on Figure \ref{fig:visus} and additional ones in supplementary 3.3. We observe that STRIPE predictions are both sharp and accurate: both the shape diversity (amplitude of the peaks) and temporal diversity match the ground truth future.

\vspace{-0.1cm}
\subsection{Model analysis \label{sec:model_analysis}}

\vspace{-0.2cm}
\begin{figure}[h]
\centering
\begin{minipage}[b]{0.45\linewidth}
  \includegraphics[width=6.5cm]{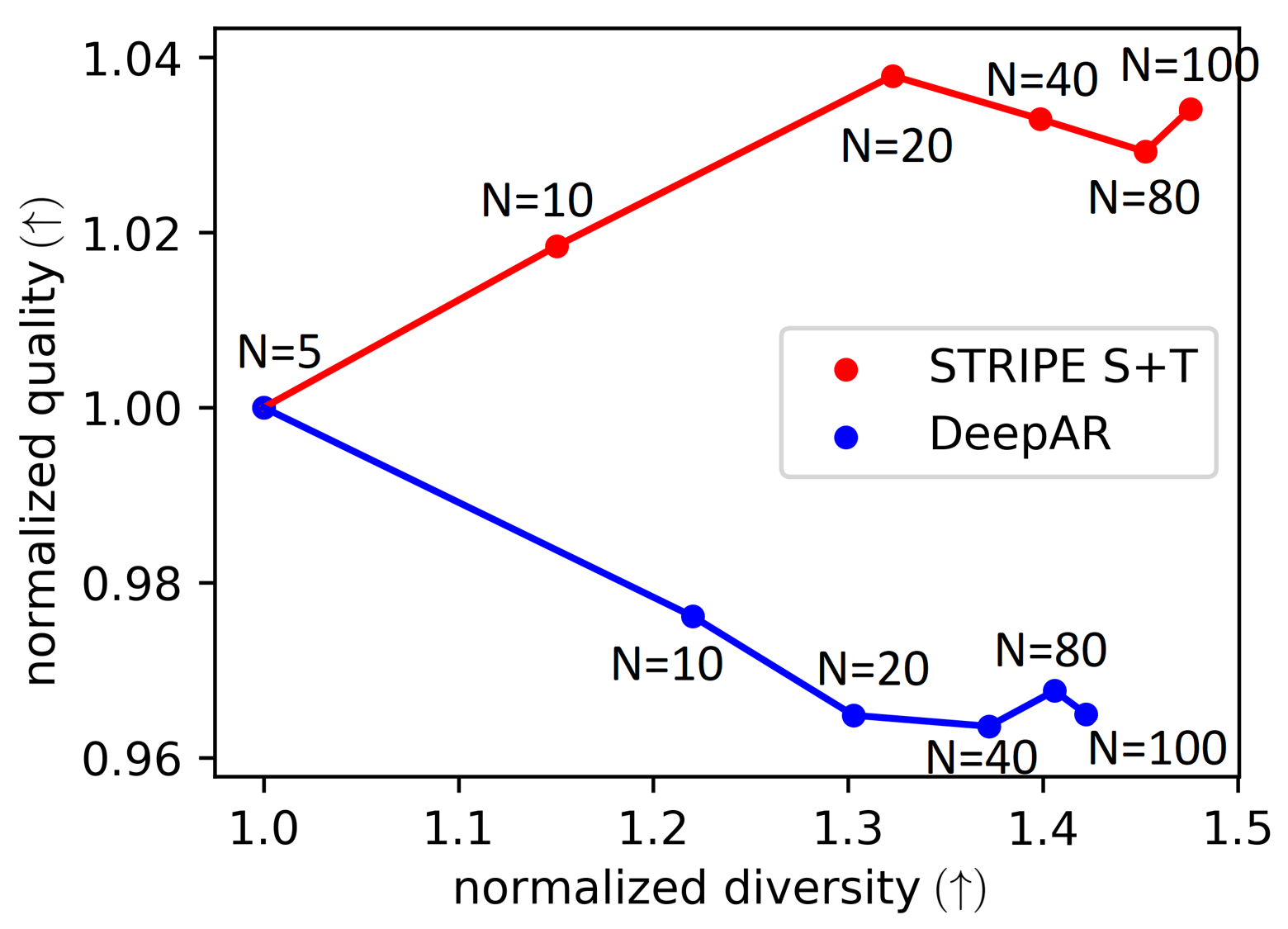}
  \centering
\caption{Influence of the number $N$ of trajectories on quality (higher is better) and diversity for the synthetic dataset.}
\label{fig:influence}
\end{minipage}
\quad \quad \quad 
\begin{minipage}[b]{0.45\linewidth}
\centering
\includegraphics[width=4.8cm]{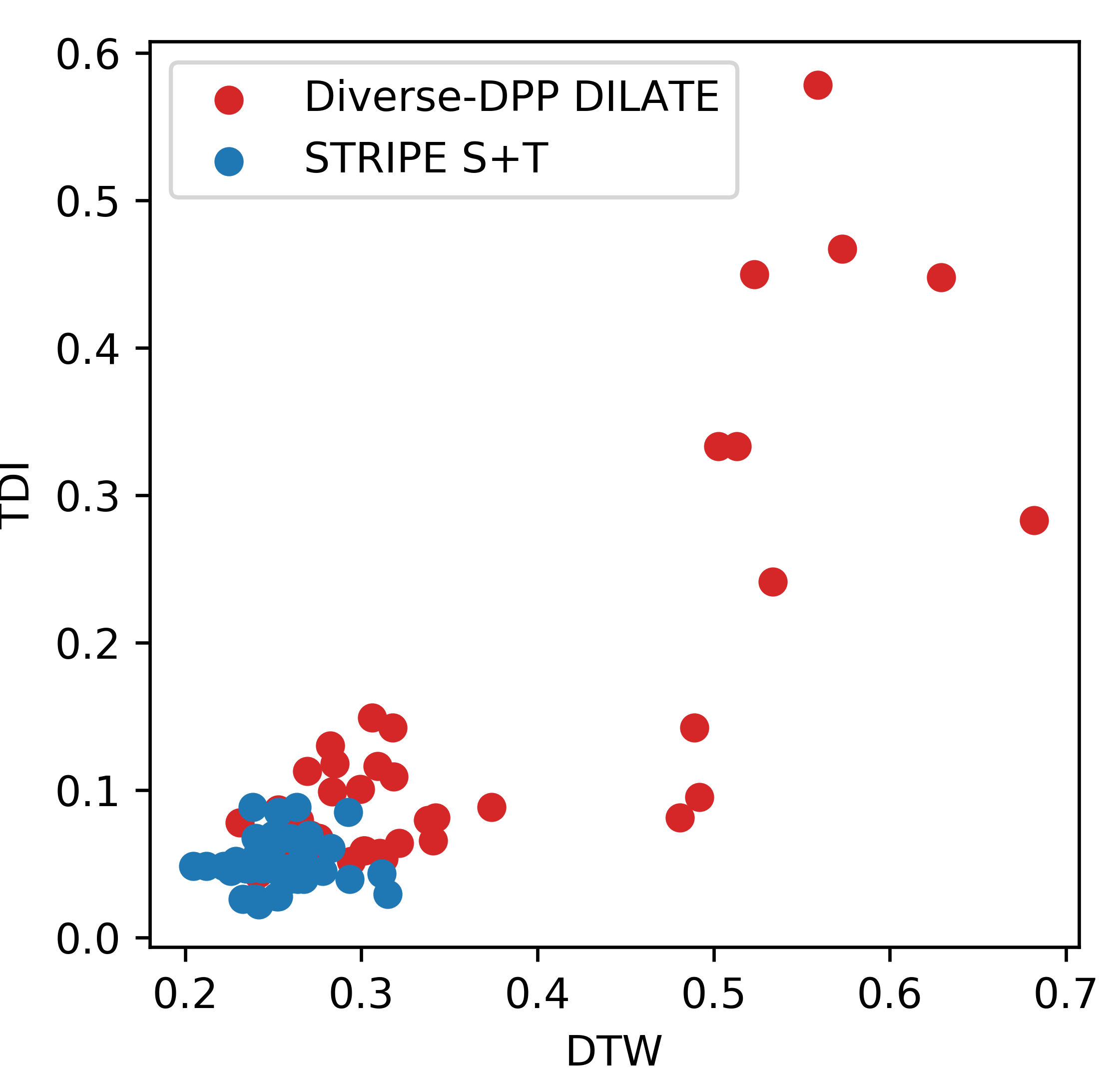}
\caption{Scatterplot of 50 predictions in the plane (DTW,TDI), comparing STRIPE S+T v.s. Diverse DPP DILATE \cite{yuan2019diverse}.
}
 \label{fig:model-analysis}
\end{minipage}
\end{figure}

 We analyze in Figure \ref{fig:influence} for the synthetic dataset the evolution of performances when increasing the number $N$ of sampled future trajectories from 5 to 100: we observe that this results in higher normalized DILATE diversity  ($\text{H}_{diversity}(5)/\text{H}_{diversity}(N)$) for STRIPE S+T without deteriorating quality (which even increases slightly). In contrast, deepAR \cite{salinas2017deepar}, which does not have control over the targeted diversity, increases diversity  with $N$ but at the cost of a loss in quality. This again confirms the relevance of our approach that effectively combines an adequate quality loss function and a structured diversity mechanism.

We provide an additional analysis to highlight the importance to separate the criteria for enforcing quality and diversity. In Figure \ref{fig:model-analysis}, we represent 50 predictions from the models Diverse DPP DILATE \cite{yuan2019diverse} and STRIPE S+T in the plane (DTW,TDI).  Diverse DPP DILATE \cite{yuan2019diverse} uses a DPP diversity loss based on the DILATE kernel, which is the same than for quality. We clearly see that the two objectives conflict: this model increases the DILATE diversity (by increasing the variance in the shape (DTW) or the time TDI) components) but a lot of these predictions have a high DILATE loss (worse quality). In contrast, STRIPE S+T predictions are diverse in DTW and TDI, and maintain an overall low DILATE loss. STRIPE S+T succeeds in recovering a set of good tradeoffs between shape and time leading a low DILATE loss.

\vspace{-0.1cm}
\section{Conclusion and perspectives}
\vspace{-0.2cm}
We present STRIPE, a probabilistic time series forecasting method that introduces structured shape and temporal diversity based on determinantal point processes. Diversity is controlled via two proposed differentiable positive semi-definite kernels for shape and time and exploits a forecasting model with a disentangled latent space. Experiments on synthetic and real-world datasets confirm that STRIPE leads to more diverse forecasts without sacrificing on quality. Ablation studies also reveal the crucial importance to decouple the criteria used for quality and diversity. 

A future perspective would be to incorporate seasonality and extrinsic prior knowledge (such as special events) \cite{laptev2017time,oreshkin2019n} to better model the non-stationary abrupt changes and their impact on diversity and model confidence \cite{corbiere2019addressing}.
Other appealing directions include diversity-promoting forecasting for exploration in reinforcement learning \cite{pathak2017curiosity,eysenbach2018diversity,leurent2020robust}, and extension of structured diversity to spatio-temporal or video prediction tasks \cite{xingjian2015convolutional,franceschi2020stochastic,leguen-aphynity}.


\section*{Broader Impact}
Probabilistic time series forecasting, especially in the non-stationary contexts, is a paramount research problem with immediate and large impacts in the society. A wide range of sensitive applications heavily rely on accurate forecasts of uncertain events with potentially sharp variations for making crucial decisions: in weather and climate science, better anticipating floods, hurricanes, earthquakes or other extreme events evolution could help taking emergency measures on time and save lives; in medicine, better predictions of an outbreak’s evolution is a particularly actual topic. 
We believe that introducing meaningful criteria such as shape and time, which are more related to application-specific evaluation metrics, is an important step toward more reliable and interpretable forecasts for decision makers.

\bibliographystyle{plain} 
\bibliography{refs.bib}

\begin{thebibliography}{10}

\bibitem{alaa2019attentive}
Ahmed~M Alaa and Mihaela van~der Schaar.
\newblock Attentive state-space modeling of disease progression.
\newblock In {\em Advances in Neural Information Processing Systems (NeurIPS)},
  pages 11334--11344, 2019.

\bibitem{azadi2017learning}
Samaneh Azadi, Jiashi Feng, and Trevor Darrell.
\newblock Learning detection with diverse proposals.
\newblock In {\em Computer Vision and Pattern Recognition (CVPR)}, pages
  7149--7157, 2017.

\bibitem{babaeizadeh2017stochastic}
Mohammad Babaeizadeh, Chelsea Finn, Dumitru Erhan, Roy~H Campbell, and Sergey
  Levine.
\newblock Stochastic variational video prediction.
\newblock {\em International Conference on Learning Representations (ICLR)},
  2018.

\bibitem{blondel2020differentiable}
Mathieu Blondel, Arthur Mensch, and Jean-Philippe Vert.
\newblock Differentiable divergences between time series.
\newblock {\em arXiv preprint arXiv:2010.08354}, 2020.

\bibitem{borovykh2017conditional}
Anastasia Borovykh, Sander Bohte, and Cornelis~W Oosterlee.
\newblock Conditional time series forecasting with convolutional neural
  networks.
\newblock {\em arXiv preprint arXiv:1703.04691}, 2017.

\bibitem{box2015time}
George~EP Box, Gwilym~M Jenkins, Gregory~C Reinsel, and Greta~M Ljung.
\newblock {\em Time series analysis: forecasting and control}.
\newblock John Wiley \& Sons, 2015.

\bibitem{chatigny2020financial}
Philippe Chatigny, Jean-Marc Patenaude, and Shengrui Wang.
\newblock Financial time series representation learning.
\newblock {\em arXiv preprint arXiv:2003.12194}, 2020.

\bibitem{chauhan2015anomaly}
Sucheta Chauhan and Lovekesh Vig.
\newblock Anomaly detection in \textsc{ECG} time signals via deep long
  short-term memory networks.
\newblock In {\em International Conference on Data Science and Advanced
  Analytics (DSAA)}, pages 1--7. IEEE, 2015.

\bibitem{chen2018neural}
Tian~Qi Chen, Yulia Rubanova, Jesse Bettencourt, and David~K Duvenaud.
\newblock Neural ordinary differential equations.
\newblock In {\em Advances in neural information processing systems (NeurIPS)},
  pages 6571--6583, 2018.

\bibitem{cho2014learning}
Kyunghyun Cho, Bart Van~Merri{\"e}nboer, Caglar Gulcehre, Dzmitry Bahdanau,
  Fethi Bougares, Holger Schwenk, and Yoshua Bengio.
\newblock Learning phrase representations using \textsc{RNN} encoder-decoder
  for statistical machine translation.
\newblock {\em arXiv preprint arXiv:1406.1078}, 2014.

\bibitem{corbiere2019addressing}
Charles Corbi{\`e}re, Nicolas Thome, Avner Bar-Hen, Matthieu Cord, and Patrick
  P{\'e}rez.
\newblock Addressing failure prediction by learning model confidence.
\newblock In {\em Advances in Neural Information Processing Systems (NeurIPS)},
  pages 2902--2913, 2019.

\bibitem{cuturi2017soft}
Marco Cuturi and Mathieu Blondel.
\newblock Soft-\textsc{DTW}: a differentiable loss function for time-series.
\newblock In {\em International Conference on Machine Learning (ICML)}, pages
  894--903, 2017.

\bibitem{dieng2019prescribed}
Adji~B Dieng, Francisco~JR Ruiz, David~M Blei, and Michalis~K Titsias.
\newblock Prescribed generative adversarial networks.
\newblock {\em arXiv preprint arXiv:1910.04302}, 2019.

\bibitem{ding2015deep}
Xiao Ding, Yue Zhang, Ting Liu, and Junwen Duan.
\newblock Deep learning for event-driven stock prediction.
\newblock In {\em International Joint Conference on Artificial Intelligence
  (IJCAI)}, 2015.

\bibitem{dona2020pde}
J{\'e}r{\'e}mie Don{\`a}, Jean-Yves Franceschi, Sylvain Lamprier, and Patrick
  Gallinari.
\newblock \textsc{PDE}-driven spatiotemporal disentanglement.
\newblock {\em arXiv preprint arXiv:2008.01352}, 2020.

\bibitem{dupont2019augmented}
Emilien Dupont, Arnaud Doucet, and Yee~Whye Teh.
\newblock Augmented neural \textsc{ODE}s.
\newblock In {\em Advances in Neural Information Processing Systems (NeurIPS)},
  pages 3134--3144, 2019.

\bibitem{elfeki2018gdpp}
Mohamed Elfeki, Camille Couprie, Morgane Riviere, and Mohamed Elhoseiny.
\newblock \textsc{GDPP}: learning diverse generations using determinantal point
  process.
\newblock {\em International Conference on Machine Learning (ICML)}, 2019.

\bibitem{eysenbach2018diversity}
Benjamin Eysenbach, Abhishek Gupta, Julian Ibarz, and Sergey Levine.
\newblock Diversity is all you need: Learning skills without a reward function.
\newblock {\em International Conference on Learning Representations (ICLR)},
  2019.

\bibitem{franceschi2020stochastic}
Jean-Yves Franceschi, Edouard Delasalles, Micka{\"e}l Chen, Sylvain Lamprier,
  and Patrick Gallinari.
\newblock Stochastic latent residual video prediction.
\newblock {\em International Conference on Machine Learning (ICML)}, 2020.

\bibitem{frias2017assessing}
Laura Fr{\'\i}as-Paredes, Ferm{\'\i}n Mallor, Mart{\'\i}n Gast{\'o}n-Romeo, and
  Teresa Le{\'o}n.
\newblock Assessing energy forecasting inaccuracy by simultaneously considering
  temporal and absolute errors.
\newblock {\em Energy Conversion and Management}, 142:533--546, 2017.

\bibitem{gasthaus2019probabilistic}
Jan Gasthaus, Konstantinos Benidis, Yuyang Wang, Syama~Sundar Rangapuram, David
  Salinas, Valentin Flunkert, and Tim Januschowski.
\newblock Probabilistic forecasting with spline quantile function
  \textsc{RNN}s.
\newblock In {\em The 22nd International Conference on Artificial Intelligence
  and Statistics (AISTATS)}, pages 1901--1910, 2019.

\bibitem{gillenwater2014expectation}
Jennifer~A Gillenwater, Alex Kulesza, Emily Fox, and Ben Taskar.
\newblock Expectation-maximization for learning determinantal point processes.
\newblock In {\em Advances in Neural Information Processing Systems (NeurIPS)},
  pages 3149--3157, 2014.

\bibitem{gneiting2007probabilistic}
Tilmann Gneiting, Fadoua Balabdaoui, and Adrian~E Raftery.
\newblock Probabilistic forecasts, calibration and sharpness.
\newblock {\em Journal of the Royal Statistical Society: Series B (Statistical
  Methodology)}, 69(2):243--268, 2007.

\bibitem{gong2014diverse}
Boqing Gong, Wei-Lun Chao, Kristen Grauman, and Fei Sha.
\newblock Diverse sequential subset selection for supervised video
  summarization.
\newblock In {\em Advances in neural information processing systems (NeurIPS)},
  pages 2069--2077, 2014.

\bibitem{leguen-aphynity}
Vincent~Le Guen, Yuan Yin, Jérémie Dona, Ibrahim Ayed, Emmanuel de~Bézenac,
  Nicolas Thome, and Patrick Gallinari.
\newblock Augmenting physical models with deep networks for complex dynamics
  forecasting.
\newblock {\em arXiv preprint arXiv:2010.04456}, 2020.

\bibitem{gupta2018social}
Agrim Gupta, Justin Johnson, Li~Fei-Fei, Silvio Savarese, and Alexandre Alahi.
\newblock Social \textsc{GAN}: Socially acceptable trajectories with generative
  adversarial networks.
\newblock In {\em Computer Vision and Pattern Recognition (CVPR)}, pages
  2255--2264, 2018.

\bibitem{hyndman2008forecasting}
Rob Hyndman, Anne~B Koehler, J~Keith Ord, and Ralph~D Snyder.
\newblock {\em Forecasting with exponential smoothing: the state space
  approach}.
\newblock Springer Science \& Business Media, 2008.

\bibitem{koenker2001quantile}
Roger Koenker and Kevin~F Hallock.
\newblock Quantile regression.
\newblock {\em Journal of economic perspectives}, 15(4):143--156, 2001.

\bibitem{koochali2020if}
Alireza Koochali, Andreas Dengel, and Sheraz Ahmed.
\newblock If you like it, gan it. probabilistic multivariate times series
  forecast with gan.
\newblock {\em arXiv preprint arXiv:2005.01181}, 2020.

\bibitem{kulesza2012determinantal}
Alex Kulesza, Ben Taskar, et~al.
\newblock Determinantal point processes for machine learning.
\newblock {\em Foundations and Trends in Machine Learning}, 5(2--3):123--286,
  2012.

\bibitem{lai2018modeling}
Guokun Lai, Wei-Cheng Chang, Yiming Yang, and Hanxiao Liu.
\newblock Modeling long-and short-term temporal patterns with deep neural
  networks.
\newblock In {\em The 41st International ACM SIGIR Conference on Research \&
  Development in Information Retrieval}, pages 95--104, 2018.

\bibitem{laptev2017time}
Nikolay Laptev, Jason Yosinski, Li~Erran Li, and Slawek Smyl.
\newblock Time-series extreme event forecasting with neural networks at
  \textsc{U}ber.
\newblock In {\em International Conference on Machine Learning (ICML)},
  volume~34, pages 1--5, 2017.

\bibitem{leguen19dilate}
Vincent Le~Guen and Nicolas Thome.
\newblock Shape and time distortion loss for training deep time series
  forecasting models.
\newblock In {\em Advances in Neural Information Processing Systems (NeurIPS)},
  pages 4191--4203. 2019.

\bibitem{leguen-fisheye}
Vincent Le~Guen and Nicolas Thome.
\newblock A deep physical model for solar irradiance forecasting with fisheye
  images.
\newblock In {\em CVPR 2020 OmniCV workshop}, 2020.

\bibitem{leguen-phydnet}
Vincent Le~Guen and Nicolas Thome.
\newblock Disentangling physical dynamics from unknown factors for unsupervised
  video prediction.
\newblock In {\em Proceedings of the IEEE/CVF Conference on Computer Vision and
  Pattern Recognition (CVPR)}, June 2020.

\bibitem{leurent2020robust}
Edouard Leurent, Denis Efimov, and Odalric-Ambrym Maillard.
\newblock Robust estimation, prediction and control with linear dynamics and
  generic costs.
\newblock In {\em Advances in Neural Information Processing Systems (NeurIPS)}.
  2020.

\bibitem{li2019enhancing}
Shiyang Li, Xiaoyong Jin, Yao Xuan, Xiyou Zhou, Wenhu Chen, Yu-Xiang Wang, and
  Xifeng Yan.
\newblock Enhancing the locality and breaking the memory bottleneck of
  transformer on time series forecasting.
\newblock In {\em Advances in Neural Information Processing Systems (NeurIPS)},
  pages 5244--5254, 2019.

\bibitem{li2017diffusion}
Yaguang Li, Rose Yu, Cyrus Shahabi, and Yan Liu.
\newblock Diffusion convolutional recurrent neural network: Data-driven traffic
  forecasting.
\newblock In {\em International Conference on Learning Representations (ICLR)},
  2018.

\bibitem{lv2015traffic}
Yisheng Lv, Yanjie Duan, Wenwen Kang, Zhengxi Li, and Fei-Yue Wang.
\newblock Traffic flow prediction with big data: a deep learning approach.
\newblock {\em IEEE Transactions on Intelligent Transportation Systems},
  16(2):865--873, 2015.

\bibitem{mariet2019dppnet}
Zelda~E Mariet, Yaniv Ovadia, and Jasper Snoek.
\newblock \textsc{DPPN}et: Approximating determinantal point processes with
  deep networks.
\newblock In {\em Advances in Neural Information Processing Systems (NeurIPS)},
  pages 3218--3229, 2019.

\bibitem{masum2018multi}
Shamsul Masum, Ying Liu, and John Chiverton.
\newblock Multi-step time series forecasting of electric load using machine
  learning models.
\newblock In {\em International Conference on Artificial Intelligence and Soft
  Computing}, pages 148--159. Springer, 2018.

\bibitem{oreshkin2019n}
Boris~N Oreshkin, Dmitri Carpov, Nicolas Chapados, and Yoshua Bengio.
\newblock \textsc{N-BEATS}: Neural basis expansion analysis for interpretable
  time series forecasting.
\newblock {\em International Conference on Learning Representations (ICLR)},
  2020.

\bibitem{pathak2017curiosity}
Deepak Pathak, Pulkit Agrawal, Alexei~A Efros, and Trevor Darrell.
\newblock Curiosity-driven exploration by self-supervised prediction.
\newblock In {\em Proceedings of the IEEE Conference on Computer Vision and
  Pattern Recognition Workshops}, pages 16--17, 2017.

\bibitem{qin2017dual}
Yao Qin, Dongjin Song, Haifeng Cheng, Wei Cheng, Guofei Jiang, and Garrison~W
  Cottrell.
\newblock A dual-stage attention-based recurrent neural network for time series
  prediction.
\newblock In {\em International Joint Conference on Artificial Intelligence
  (IJCAI)}, pages 2627--2633. AAAI Press, 2017.

\bibitem{rangapuram2018deep}
Syama~Sundar Rangapuram, Matthias~W Seeger, Jan Gasthaus, Lorenzo Stella,
  Yuyang Wang, and Tim Januschowski.
\newblock Deep state space models for time series forecasting.
\newblock In {\em Advances in neural information processing systems (NeurIPS)},
  pages 7785--7794, 2018.

\bibitem{rasul2020multi}
Kashif Rasul, Abdul-Saboor Sheikh, Ingmar Schuster, Urs Bergmann, and Roland
  Vollgraf.
\newblock Multi-variate probabilistic time series forecasting via conditioned
  normalizing flows.
\newblock {\em arXiv preprint arXiv:2002.06103}, 2020.

\bibitem{rivest2019new}
Fran{\c{c}}ois Rivest and Richard Kohar.
\newblock A new timing error cost function for binary time series prediction.
\newblock {\em IEEE transactions on neural networks and learning systems},
  2019.

\bibitem{robinson2019flexible}
Joshua Robinson, Suvrit Sra, and Stefanie Jegelka.
\newblock Flexible modeling of diversity with strongly log-concave
  distributions.
\newblock In {\em Advances in Neural Information Processing Systems (NeurIPS)},
  pages 15199--15209, 2019.

\bibitem{romano2019conformalized}
Yaniv Romano, Evan Patterson, and Emmanuel Candes.
\newblock Conformalized quantile regression.
\newblock In {\em Advances in Neural Information Processing Systems (NeurIPS)},
  pages 3538--3548, 2019.

\bibitem{rubanova2019latent}
Yulia Rubanova, Tian~Qi Chen, and David~K Duvenaud.
\newblock Latent ordinary differential equations for irregularly-sampled time
  series.
\newblock In {\em Advances in Neural Information Processing Systems (NeurIPS)},
  pages 5321--5331, 2019.

\bibitem{salinas2019high}
David Salinas, Michael Bohlke-Schneider, Laurent Callot, Roberto Medico, and
  Jan Gasthaus.
\newblock High-dimensional multivariate forecasting with low-rank gaussian
  copula processes.
\newblock In {\em Advances in Neural Information Processing Systems (NeurIPS)},
  pages 6824--6834, 2019.

\bibitem{salinas2017deepar}
David Salinas, Valentin Flunkert, Jan Gasthaus, and Tim Januschowski.
\newblock Deep\textsc{AR}: Probabilistic forecasting with autoregressive
  recurrent networks.
\newblock {\em International Journal of Forecasting}, 36(3):1181--1191, 2020.

\bibitem{sen2019think}
Rajat Sen, Hsiang-Fu Yu, and Inderjit~S Dhillon.
\newblock Think globally, act locally: A deep neural network approach to
  high-dimensional time series forecasting.
\newblock In {\em Advances in Neural Information Processing Systems (NeurIPS)},
  pages 4838--4847, 2019.

\bibitem{tavenard2020tslearn}
Romain Tavenard, Johann Faouzi, Gilles Vandewiele, Felix Divo, Guillaume
  Androz, Chester Holtz, Marie Payne, Roman Yurchak, Marc Ru{\ss}wurm, Kushal
  Kolar, et~al.
\newblock Tslearn, a machine learning toolkit for time series data.
\newblock {\em Journal of Machine Learning Research}, 21(118):1--6, 2020.

\bibitem{thiede2019analyzing}
Luca~Anthony Thiede and Pratik~Prabhanjan Brahma.
\newblock Analyzing the variety loss in the context of probabilistic trajectory
  prediction.
\newblock In {\em International Conference on Computer Vision (ICCV)}, pages
  9954--9963, 2019.

\bibitem{vallance2017towards}
Lo{\"\i}c Vallance, Bruno Charbonnier, Nicolas Paul, St{\'e}phanie Dubost, and
  Philippe Blanc.
\newblock Towards a standardized procedure to assess solar forecast accuracy: A
  new ramp and time alignment metric.
\newblock {\em Solar Energy}, 150:408--422, 2017.

\bibitem{vaswani2017attention}
Ashish Vaswani, Noam Shazeer, Niki Parmar, Jakob Uszkoreit, Llion Jones,
  Aidan~N Gomez, {\L}ukasz Kaiser, and Illia Polosukhin.
\newblock Attention is all you need.
\newblock In {\em Advances in neural information processing systems (NIPS)},
  pages 5998--6008, 2017.

\bibitem{vayer2020time}
Titouan Vayer, Laetitia Chapel, Nicolas Courty, R{\'e}mi Flamary, Yann
  Soullard, and Romain Tavenard.
\newblock Time series alignment with global invariances.
\newblock {\em arXiv preprint arXiv:2002.03848}, 2020.

\bibitem{wang2019nonlinear}
Dilin Wang and Qiang Liu.
\newblock Nonlinear \textsc{S}tein variational gradient descent for learning
  diversified mixture models.
\newblock In {\em International Conference on Machine Learning (ICML)}, pages
  6576--6585, 2019.

\bibitem{wen2019deep}
Ruofeng Wen and Kari Torkkola.
\newblock Deep generative quantile-copula models for probabilistic forecasting.
\newblock {\em ICML Time Series Workshop}, 2019.

\bibitem{wen2017multi}
Ruofeng Wen, Kari Torkkola, Balakrishnan Narayanaswamy, and Dhruv Madeka.
\newblock A multi-horizon quantile recurrent forecaster.
\newblock {\em NeurIPS Time Series Workshop}, 2017.

\bibitem{xingjian2015convolutional}
Shi Xingjian, Zhourong Chen, and Hao et~al Wang.
\newblock Convolutional \textsc{LSTM} network: A machine learning approach for
  precipitation nowcasting.
\newblock In {\em Advances in Neural Information Processing Systems (NeurIPS)},
  2015.

\bibitem{yu2016temporal}
Hsiang-Fu Yu, Nikhil Rao, and Inderjit~S Dhillon.
\newblock Temporal regularized matrix factorization for high-dimensional time
  series prediction.
\newblock In {\em Advances in neural information processing systems (NIPS)},
  pages 847--855, 2016.

\bibitem{yu2017long}
Rose Yu, Stephan Zheng, Anima Anandkumar, and Yisong Yue.
\newblock Long-term forecasting using tensor-train \textsc{RNN}s.
\newblock {\em arXiv preprint arXiv:1711.00073}, 2017.

\bibitem{yuan2019diverse}
Ye~Yuan and Kris Kitani.
\newblock Diverse trajectory forecasting with determinantal point processes.
\newblock {\em International Conference on Learning Representations (ICLR)},
  2020.

\bibitem{zheng2017electric}
Jian Zheng, Cencen Xu, Ziang Zhang, and Xiaohua Li.
\newblock Electric load forecasting in smart grids using long-short-term-memory
  based recurrent neural network.
\newblock In {\em 51st Annual Conference on Information Sciences and Systems
  (CISS)}, pages 1--6. IEEE, 2017.

\bibitem{zhu2017deep}
Lingxue Zhu and Nikolay Laptev.
\newblock Deep and confident prediction for time series at \textsc{U}ber.
\newblock In {\em International Conference on Data Mining Workshops (ICDMW)},
  pages 103--110. IEEE, 2017.

\end{thebibliography}


\newcommand{\etalchar}[1]{$^{#1}$}
\providecommand{\bysame}{\leavevmode\hbox to3em{\hrulefill}\thinspace}
\providecommand{\MR}{\relax\ifhmode\unskip\space\fi MR }
\providecommand{\MRhref}[2]{%
  \href{http://www.ams.org/mathscinet-getitem?mr=#1}{#2}
}
\providecommand{\href}[2]{#2}
\begin{thebibliography}{CVBM07}

\bibitem[CVBM07]{cuturi2007kernel}
Marco Cuturi, Jean-Philippe Vert, Oystein Birkenes, and Tomoko Matsui, \emph{A
  kernel for time series based on global alignments}, 2007 IEEE International
  Conference on Acoustics, Speech and Signal Processing-ICASSP'07, vol.~2,
  IEEE, 2007, pp.~II--413.

\bibitem[DRBT19]{dieng2019prescribed}
Adji~B Dieng, Francisco~JR Ruiz, David~M Blei, and Michalis~K Titsias,
  \emph{Prescribed generative adversarial networks}, arXiv preprint
  arXiv:1910.04302 (2019).

\bibitem[KT{\etalchar{+}}12]{kulesza2012determinantal}
Alex Kulesza, Ben Taskar, et~al., \emph{Determinantal point processes for
  machine learning}, Foundations and Trends in Machine Learning \textbf{5}
  (2012), no.~2--3, 123--286.

\bibitem[LGT19]{leguen19dilate}
Vincent Le~Guen and Nicolas Thome, \emph{Shape and time distortion loss for
  training deep time series forecasting models}, Advances in Neural Information
  Processing Systems (NeurIPS), 2019, pp.~4191--4203.

\bibitem[OCCB20]{oreshkin2019n}
Boris~N Oreshkin, Dmitri Carpov, Nicolas Chapados, and Yoshua Bengio,
  \emph{\textsc{N-BEATS}: Neural basis expansion analysis for interpretable
  time series forecasting}, International Conference on Learning
  Representations (ICLR) (2020).

\bibitem[SFGJ20]{salinas2017deepar}
David Salinas, Valentin Flunkert, Jan Gasthaus, and Tim Januschowski,
  \emph{Deep\textsc{AR}: Probabilistic forecasting with autoregressive
  recurrent networks}, International Journal of Forecasting \textbf{36} (2020),
  no.~3, 1181--1191.

\bibitem[TB19]{thiede2019analyzing}
Luca~Anthony Thiede and Pratik~Prabhanjan Brahma, \emph{Analyzing the variety
  loss in the context of probabilistic trajectory prediction}, International
  Conference on Computer Vision (ICCV), 2019, pp.~9954--9963.

\bibitem[YK20]{yuan2019diverse}
Ye~Yuan and Kris Kitani, \emph{Diverse trajectory forecasting with
  determinantal point processes}, International Conference on Learning
  Representations (ICLR) (2020).

\end{thebibliography}

\end{document}



\maketitle


\section{Proof of Proposition 1}
\label{sup:proof}

We define the following kernels for comparing two trajectories $\y \in \mathbb{R}^{d \times \tau}$ and $\mathbf{z} \in \mathbb{R}^{d \times \tau}$:
\begin{align}
\mathcal{K}^{shape}(\mathbf{y},\mathbf{z}) &= e^{-\gamma \: \text{DTW}_{\gamma}^{\mathbf{\Delta}}(\mathbf{y},\mathbf{z})} \label{eq:kshape} \\
\mathcal{K}^{time}(\mathbf{y},\mathbf{z}) &= e^{- \text{DTW}^{\mathbf{\Delta}}_{\gamma}(\mathbf{y},\mathbf{z}) / \gamma}
  \times \text{TDI}^{\mathbf{\Delta, {\Omega_{sim}}}}_{\gamma} (\mathbf{y},\mathbf{z})  \label{eq:ktime}
\end{align}
where $\text{DTW}_{\gamma}^{\mathbf{\Delta}}(\mathbf{y},\mathbf{z}) := - \gamma  \log \left ( \sum_{\mathbf{A} \in \mathcal{A}_{\tau,\tau}} \exp ^{ -\frac{ \left \langle \mathbf{A},\mathbf{\Delta}(\mathbf{y},\mathbf{z}) \right \rangle}{\gamma} } \right )$.

\begin{prop}
Providing that $k$ is a positive semi-definite (PSD) kernel such that $\frac{k}{1+k}$ is also PSD, if we define the cost matrix $\Delta$ with general term $\delta(y_i,z_j) = -\gamma \log k(y_i,z_j)$, then  $\mathcal{K}^{shape}$ and $\mathcal{K}^{time}$ defined respectively in Equations (\ref{eq:kshape}) and (\ref{eq:ktime}) are PSD kernels.
\end{prop}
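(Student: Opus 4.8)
The plan is to substitute $\delta(y_i,z_j)=-\gamma\log k(y_i,z_j)$, i.e.\ $k(y_i,z_j)=e^{-\delta(y_i,z_j)/\gamma}$, and observe that both kernels then reduce to the Global Alignment Kernel (GAK) of Cuturi et al., for which the hypothesis ``$k$ PSD and $\frac{k}{1+k}$ PSD'' is precisely the classical condition guaranteeing positive-definiteness. Since $\langle\mathbf{A},\mathbf{\Delta}(\mathbf{y},\mathbf{z})\rangle=\sum_{(i,j):A_{ij}=1}\delta(y_i,z_j)$, one gets
\[
 e^{-\text{DTW}^{\mathbf{\Delta}}_{\gamma}(\mathbf{y},\mathbf{z})/\gamma}\;=\;\sum_{\mathbf{A}\in\mathcal{A}_{\tau,\tau}}\ \prod_{(i,j):A_{ij}=1}k(y_i,z_j)\;=:\;k_{GA}(\mathbf{y},\mathbf{z}),
\]
hence $\mathcal{K}^{shape}(\mathbf{y},\mathbf{z})=e^{-\gamma\,\text{DTW}^{\mathbf{\Delta}}_{\gamma}(\mathbf{y},\mathbf{z})}=k_{GA}(\mathbf{y},\mathbf{z})^{\gamma^{2}}$; and since $\text{TDI}^{\mathbf{\Delta},\Omega_{sim}}_{\gamma}$ is the ratio of $\sum_{\mathbf{A}}\langle\mathbf{A},\Omega_{sim}\rangle e^{-\langle\mathbf{A},\mathbf{\Delta}\rangle/\gamma}$ to the partition function $\sum_{\mathbf{A}}e^{-\langle\mathbf{A},\mathbf{\Delta}\rangle/\gamma}$, multiplying it by $e^{-\text{DTW}^{\mathbf{\Delta}}_{\gamma}/\gamma}$ telescopes to $\mathcal{K}^{time}(\mathbf{y},\mathbf{z})=\sum_{\mathbf{A}\in\mathcal{A}_{\tau,\tau}}\langle\mathbf{A},\Omega_{sim}\rangle\prod_{(i,j):A_{ij}=1}k(y_i,z_j)$.

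The core step is the positive-definiteness, and in fact infinite divisibility, of $k_{GA}$. Setting $\tilde k:=\frac{k}{1+k}$, which is PSD by assumption and valued in $(0,1)$ because $k>0$, one has $k=\sum_{n\ge 1}\tilde k^{\,n}$ pointwise; each $\tilde k^{\,n}$ is PSD by the Schur product theorem and the series converges, which is exactly the ``geometric divisibility'' that lets one realize $k_{GA}$ as a Haussler convolution kernel for the decomposition of a pair of trajectories into the cells of its DTW alignments, built from $k$ by nothing but (countable) sums and products of PSD kernels; hence $k_{GA}$ is PSD, and the same structure shows it is infinitely divisible, so $k_{GA}^{\gamma^{2}}=\mathcal{K}^{shape}$ is PSD. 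Equivalently, $k_{GA}$ being positive, PSD and infinitely divisible, $-\log k_{GA}$ is conditionally negative definite, so $\text{DTW}^{\mathbf{\Delta}}_{\gamma}=-\gamma\log k_{GA}$ is a nonnegative multiple of a cND kernel and $e^{-\gamma\,\text{DTW}^{\mathbf{\Delta}}_{\gamma}}$ is PSD by Schoenberg's theorem.

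For $\mathcal{K}^{time}$ I would write $\langle\mathbf{A},\Omega_{sim}\rangle=\sum_{(a,b):A_{ab}=1}(\Omega_{sim})_{ab}$, interchange sums to obtain $\mathcal{K}^{time}(\mathbf{y},\mathbf{z})=\sum_{a,b}(\Omega_{sim})_{ab}\,Q\bigl((\mathbf{y},a),(\mathbf{z},b)\bigr)$ with $Q\bigl((\mathbf{y},a),(\mathbf{z},b)\bigr):=\sum_{\mathbf{A}:A_{ab}=1}\prod_{(i,j):A_{ij}=1}k(y_i,z_j)$, and cut every alignment at the marked cell $(a,b)$ into a prefix warping path $(1,1)\to(a,b)$ and a suffix warping path $(a,b)\to(\tau,\tau)$. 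This exhibits $Q$ as a product of a prefix GAK, the local term $k(y_a,z_b)$ and a suffix GAK; each of these is a restriction of $k_{GA}$ and is PSD \emph{once the cut index is carried as part of the object}, i.e.\ when $Q$ is viewed as a kernel on the enlarged domain of time-marked trajectories $(\mathbf{y},a)$, so $Q$ is PSD there by the Schur product theorem. Finally, using the spectral decomposition $\Omega_{sim}=\sum_{\ell}\lambda_{\ell}\psi_{\ell}\psi_{\ell}^{\top}$ with $\lambda_{\ell}\ge0$ (here PSD-ness of $\Omega_{sim}$, which holds by its definition as a similarity matrix, enters) and writing $\Phi$ for the feature map of $Q$, one gets $\mathcal{K}^{time}(\mathbf{y},\mathbf{z})=\sum_{\ell}\lambda_{\ell}\bigl\langle\sum_{a}\psi_{\ell}(a)\Phi(\mathbf{y},a),\ \sum_{b}\psi_{\ell}(b)\Phi(\mathbf{z},b)\bigr\rangle$, a nonnegative combination of PSD kernels, hence PSD.

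I expect the $\mathcal{K}^{time}$ reduction to be the main obstacle. The prefix and suffix alignment sums are \emph{not} symmetric if one naively truncates the two arguments to different lengths, so symmetry (and with it PSD-ness) must be recovered by carrying the cut index in the object and transposing the alignment, which requires setting up the ``time-marked trajectory'' bookkeeping carefully; and gluing the prefix and suffix at the shared cell $(a,b)$ double counts the local factor $k(y_a,z_b)$, whose removal---by absorbing it through the GAK dynamic-programming recursion, which always factors the endpoint kernel out of a prefix/suffix GAK---must not destroy PSD-ness of the resulting building blocks. A minor additional point is that, because $\mathcal{K}^{shape}$ as written carries the exponent $\gamma^{2}$, one genuinely needs infinite divisibility of $k_{GA}$ and not just positive-definiteness, which is again a by-product of the geometric-divisibility structure underlying Cuturi's construction.
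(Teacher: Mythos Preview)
For $\mathcal{K}^{shape}$ you and the paper do the same thing: unwind the soft-DTW sum and invoke Cuturi's Theorem~1 to conclude that the global alignment kernel $k_{GA}(\y,\z)=\sum_{\mathbf{A}}\prod k(y_i,z_j)$ is PSD. The paper, however, simply equates $\exp(-\gamma\,\text{DTW}^{\mathbf{\Delta}}_{\gamma})$ with $k_{GA}$ in its last two displayed lines (in effect reading the exponent in the definition of $\mathcal{K}^{shape}$ as $-\text{DTW}/\gamma$, the same one appearing in $\mathcal{K}^{time}$), and never raises the $\gamma^{2}$ issue. Your attempted fix via infinite divisibility of $k_{GA}$ is a genuine gap: the geometric-divisibility identity $k=\sum_{n\ge1}\tilde k^{\,n}$ concerns the \emph{local} kernel and yields PSD-ness of $k_{GA}$ through Cuturi's convolution-kernel argument, but nothing in that argument shows that $-\log k_{GA}$ is conditionally negative definite. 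A sum over alignments of infinitely divisible terms need not be infinitely divisible, so the Schoenberg step is unsupported as written.

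For $\mathcal{K}^{time}$ your route is genuinely different from the paper's. The paper does not cut alignments at a marked cell; instead it reruns Cuturi's construction with a weight inserted from the outset: it forms $\mathcal{K}_w(\y,\z)=\sum_{a,b}w(a,b)\,\kappa(\y_a,\z_b)$ over repetition vectors $a,b$ with $\kappa$ the coordinatewise product of $\chi=\tfrac{k}{1+k}$, reparametrises by $(\pi,v)$, sums the geometric series in $v$ to reach $\sum_{\mathbf{A}}w(\mathbf{A})\prod k$, and finally specialises $w(\mathbf{A})=\langle\mathbf{A},\Omega_{sim}\rangle$. What this buys is that the argument never appeals to a spectral decomposition of $\Omega_{sim}$, whereas your proof needs $\Omega_{sim}$ to be PSD---an assumption not stated in the proposition---in order to turn $\sum_{a,b}(\Omega_{sim})_{ab}\,Q$ into a nonnegative combination of inner products. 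Your prefix/suffix factorisation of $Q$ is a nice structural observation, but (as you yourself flag) both the symmetrisation via time-marked trajectories and the removal of the doubly counted factor $k(y_a,z_b)$ must be carried out explicitly before PSD-ness of the building blocks is secured; the paper's weighted-GAK derivation sidesteps these bookkeeping issues altogether.
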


\begin{proof}
The proof for $\mathcal{K}^{shape}$ is a direct consequence of Theorem 1 in \cite{cuturi2007kernel}. Under the conditions that $k$ and $\frac{k}{1+k}$ are PSD kernels, Theorem 1 in  \cite{cuturi2007kernel} states that for any alignment $\pi = (\pi_1,\pi_2$) that respects the warping conditions, the following kernel $K$ is also PSD:
\begin{align*}
    K(\mathbf{y},\mathbf{z}) & := \sum_{\pi} \prod_{i=1}^{|\pi|} k \left( \y_{\pi_1(i)},\z_{\pi_2(i)} \right) \\
 &= \sum_{\pi} \prod_{i=1}^{|\pi|} \exp^{-\frac{ \delta  \left( \y_{\pi_1(i)},\z_{\pi_2(i)} \right)  }{\gamma}} \\
 &= \sum_{\pi}  \exp^{-\sum_{i=1}^{|\pi|}  \frac{ \delta  \left( \y_{\pi_1(i)},\z_{\pi_2(i)} \right)  }{\gamma}} \\
  &=  \sum_{\mathbf{A} \in \mathcal{A}_{\tau,\tau}}  \exp^{- \frac{\left \langle A , \Delta(\mathbf{y},\mathbf{z}) \right \rangle}{\gamma} } \\
  &= \exp^{- \gamma \:  \text{DTW}_{\gamma}^{\mathbf{\Delta}}(\mathbf{y},\mathbf{z})}\\
  &= \mathcal{K}^{shape}(\y, \mathbf{z})
\end{align*}{}
This proves that  $\mathcal{K}^{shape}$ is a PSD kernel.\\

For $\mathcal{K}^{time}$, we adapt the proof from \cite{cuturi2007kernel}.  First, for any time series $\y= (\y_1,\dots,\y_n) \in \mathbb{R}^{d \times n}$ of length $n$ and for any sequence $a \in \mathbb{N}^n$, we introduce the notation:
\begin{equation}
    \y_a = (\underset{a_1 \text{~times}}{\underbrace{\y_1,\dots,\y_1}}, \dots,  \underset{a_n \text{~times}}{\underbrace{\y_n,\dots,\y_n}})
\end{equation}

Let $\chi$ be any PSD kernel defined on $\mathbb{R}^d$ with the following condition $|\chi| < 1$, we introduce the kernel $\kappa$ defined as:
\begin{equation}
    \kappa(\y,\z) = 
    \begin{cases}
    \prod_{i=1}^{|x|} \chi(\y_i, \z_j)  \text{~~if~~} |\y| = |\z| \\
    0 \text{~~~otherwise}
    \end{cases}
\end{equation}

Then, given a strictly positive weighting function $w(a,b) > 0$, the following kernel $\mathcal{K}_w$ defined in Eq. (\ref{eq:Kw})  is PSD by construction:
\begin{equation}
    \mathcal{K}_w(\y,\z) := \sum_{a \in \mathbb{N}^n} \sum_{b \in \mathbb{N}^m} w(a,b) ~ \kappa(\y_a, \z_b)
    \label{eq:Kw}
\end{equation}
where we recall that $n=|\y|$ and $m=|\z|$. We denote $\epsilon_a = (\underset{a_1 \text{~times}}{\underbrace{1,\dots,1}}, \dots,  \underset{a_p \text{~times}}{\underbrace{p,\dots,p}})$ for any $a\in \mathbb{N}^p$. We also write for any sequences $u$ and $v$ of common length $p$: $u \otimes v = ((u_1,v_1),\dots,(u_p,v_p))$. With these notations, we can rewrite $\mathcal{K}_w$ as:
\begin{equation}
    \mathcal{K}_w(\y,\z) = \sum_{ \overset{a \in \mathbb{N}^n, b \in \mathbb{N}^m}{\Vert a \Vert = \Vert b \Vert} } w(a,b)  \prod_{i=1}^{\Vert  a \Vert} \chi((\y,\z)_{\epsilon_a \otimes \epsilon_b(i)})
    \label{eq:Kw_ab}
\end{equation}

Notice now for each couple $(a,b)$ there exists a unique alignment path $\pi$ and an integral vector $v$ verifying $\pi_v = \epsilon_a \otimes \epsilon_b$. Conversely, for each couple $(\pi,v)$ there exists a unique pair $(a,b)$ verifying  $\pi_v = \epsilon_a \otimes \epsilon_b$. Therefore the kernel $\mathcal{K}_w$ in Eq. (\ref{eq:Kw_ab}) can be written equivalently with a parameterization on $(\pi,v)$ for $w$:
\begin{equation}
    \mathcal{K}_w(\y,\z) = \sum_{\pi \in \mathcal{A}_{n,m}} \sum_{v \in \mathbb{N}^{|\pi|}} w(\pi,v) \prod_{j=1}^{|\pi|} \chi((\y,\z)_{\pi_v(j)})
\end{equation}
\label{eq:Kw_piv}
where $\chi_{\pi(j)}$ is a shortcut for $\chi(\y_{\pi_1(j)}, \z_{\pi_2(j)})$.\\

Now we assume that the weighting function $w$ depends only on $\pi$: $w(\pi,v)=w(\pi)$. Then we have:
\begin{align*}
\mathcal{K}_w(\y,\z) &=  \sum_{\pi \in \mathcal{A}_{n,m}} w(\pi) \sum_{v \in \mathbb{N}^{|\pi|}} \prod_{j=1}^{|\pi|} \chi^{v_j}_{\pi(j)} \\
 &= \sum_{\pi \in \mathcal{A}_{n,m}} w(\pi) \prod_{j=1}^{|\pi|} \left( \chi_{\pi(j)} + \chi_{\pi(j)}^2  + \chi_{\pi(j)}^3 + \dots   \right)\\
  &=  \sum_{\pi \in \mathcal{A}_{n,m}} w(\pi) \prod_{j=1}^{|\pi|} \frac{\chi_{\pi(j)}}{1-\chi_{\pi(j)}}
\end{align*}
By setting now $\chi = \frac{k}{1+k}$ which is PSD by hypothesis and verifies $| \chi | <1$ (recall that $k=e^{- \frac{\mathbf{\Delta}}{\gamma}} $), we get:

\begin{align*}
\mathcal{K}_w(\y,\z) &=  \sum_{\pi \in \mathcal{A}_{n,m}} w(\pi) \prod_{j=1}^{|\pi|} k_{\pi(j)} \\
 &=  \sum_{\pi \in \mathcal{A}_{n,m}} w(\pi) \prod_{j=1}^{|\pi|} k(\y_{\pi_1(j)} , \z_{\pi_2(j)}) \\
&=  \sum_{\pi \in \mathcal{A}_{n,m}} w(\pi) \prod_{j=1}^{|\pi|} e^{ - \frac{ \delta \left(  \y_{\pi_1(j)} , \z_{\pi_2(j)} \right) }{\gamma} } \\ 
 &=  \sum_{\pi \in \mathcal{A}_{n,m}} w(\pi)  ~~ e^{ -  \frac{ \sum_{j=1}^{|\pi|} \delta \left(  \y_{\pi_1(j)} , \z_{\pi_2(j)} \right) }{\gamma} } \\ 
 &= \sum_{\mathbf{A} \in \mathcal{A}_{n,m}} w(\mathbf{A}) ~~ e^{ -  \frac{\left\langle \mathbf{A} , \mathbf{\Delta(\y,\z)} \right\rangle}{\gamma} } 
\end{align*}

With the particular choice $w(\mathbf{A}) = \left\langle \mathbf{A},\mathbf{\Omega_{sim}} \right\rangle$, we recover: 
\begin{align*}
\mathcal{K}_w(\y,\z) &=   \sum_{\mathbf{A} \in \mathcal{A}} \left\langle \mathbf{A},\mathbf{\Omega_{sim}} \right\rangle  ~~ e^{ -  \frac{\left\langle \mathbf{A} , \mathbf{\Delta(\y,\z)} \right\rangle}{\gamma} } \\
&= Z  \times \text{TDI}^{\mathbf{\Delta,\Omega_{sim}}}_{\gamma}(\y,\z) \\
&=  e^{- \text{DTW}^{\mathbf{\Delta}}_{\gamma}(\y,\z) / \gamma}
  \times \text{TDI}^{\mathbf{\Delta, {\Omega_{sim}}}}_{\gamma} (\y,\z) \\
  &= \mathcal{K}_{time}(\y,\z)
\end{align*}
which finally proves that $\mathcal{K}^{time}$ is a valid PSD kernel.

\end{proof}{}

The particular choice  $k(u,v)=  \frac{\frac{1}{2} e^{-\frac{(u-v)^2}{\sigma^2}}} { 1-\frac{1}{2} e^{-\frac{(u-v)^2}{\sigma^2}}}$ fullfills Prop 1 requirements: $k$ is indeed PSD as the infinite limit of a sequence of PSD kernels $\sum_{i=1}^{\infty} g^i = \frac{g}{1-g} = k$, where $g$ is a halved Gaussian PSD kernel: $g(u,v)=  \frac{1}{2} e^{-\frac{(u-v)^2}{\sigma^2}}$. 

For this choice of $\kappa$, the corresponding parwise cost matrix writes
\begin{equation*}
    \delta(y_i,z_j) = \gamma \left[ \dfrac{(y_i-z_j)^2}{\sigma^2}  - \log \left( 2 - e^{\frac{-(y_i-z_j)^2}{\sigma^2}}  \right)  \right] 
\end{equation*}

\section{Derivation of $\mathcal{L}_{diversity}$}
\label{sup:dpp}

Determinantal Point Processes (DPPs) \cite{kulesza2012determinantal} are a probabilistic tool for describing the diversity of a ground set of items $\mathcal{S}= \left\{\mathbf{y}_1,...,\mathbf{y}_N \right\}$. Diversity is controlled via the choice of a positive semi-definite (PSD) kernel $\mathcal{K}$ for comparing items. A DPP is a probability distribution over all subsets of $\mathcal{S}$ that assigns the following probability to a random subset $\mathbf{Y}$:
\begin{equation}
    \mathcal{P}_{\mathbf{K}}(\mathbf{Y}=Y) = \frac{\det(\mathbf{K}_Y)}{\sum_{Y' \subseteq \mathcal{S}}\det(\mathbf{K}_Y')} = \frac{\det(\mathbf{K}_Y)}{\det(\mathbf{K+I})}
\end{equation}{}
where $\mathbf{K}$ denotes the kernel in matrix form and $\mathbf{K}_A$ is its restriction to the elements indexed by $A$ : $\mathbf{K}_A = [\mathbf{K}_{i,j}]_{i,j \in A}$. 

Intuitively, a DPP encourages the selection of diverse elements from the ground set $\mathcal{Y}$. If $\mathcal{Y}$ is more diverse, a random subset $Y \sim \text{DPP}(\mathcal{K})$ sampled from the DPP will select more items, \ie will have a larger cardinality. This idea is embedded into the diversity loss $\mathcal{L}_{diversity}$ proposed in \cite{yuan2019diverse}:
\begin{equation}
    \mathcal{L}_{diversity}(\mathcal{Y} ; \mathbf{K}) = -\mathbb{E}_{Y \sim \text{DPP}(\mathbf{K})} |Y| = - Trace(\mathbf{I}-(\mathbf{K}+\mathbf{I})^{-1})
    \label{eq:ldiversity}
\end{equation}{}

\section{Experiments}

\subsection{Datasets and implementation details}
\label{sup:implementation}

\paragraph{Synthetic dataset}
We use a synthetic dataset similar to \cite{leguen19dilate} that consists in predicting sudden changes (step functions) based on a two-peaks input signal. For each time series, the 20 first timesteps are the inputs, and the last 20 steps the targets to forecast. In each series, the input range is composed of 2 peaks at random temporal positions $i_1$ and $i_2$ and random amplitudes $j_1$ and $j_2$ between 0 and 1, and the target range is composed of a step of amplitude $j_2-j_1$ at stochastic position $i_2 + (i_2-i_1) + randint(-3;3).$ All time series are corrupted by an additive Gaussian white noise of variance 0.01.

The difference with \cite{leguen19dilate} is that for each input series, we generate 10 different future series of length 20 by adding noise on the step amplitude and localisation. The dataset is composed of $100 \times 10=1000$ time series for each train/valid/test split.

\paragraph{Neural network architectures} For the synthetic dataset, we use a stochastic predictive model based on a conditional variational autoencoder (cVAE).The encoder of the cVAE is a RNN with 1 layer of 128 GRU units, followed by a MLP which outputs the mean and variance of the latent state Gaussian distribution. We fixed by cross-validation the size of the latent state to $k=16$. The decoder is another RNN with $128+16=144$ GRU units responsible for producing the future trajectory.

For the real-world datasets, we use a deterministic predictive Seq2Seq model with 1 layer of 128 GRU units for the encoder, and $128+16=144$ units for the decoder.

In all experiments, the STRIPE-shape proposal module is composed of a RNN with a layer of 128 GRU units followed by an MLP with 3 layers of 512 neurons (with BatchNormalization and LeakyReLU activations) and a final linear layer to produce $N=10$ latent codes of dimension $k/2=8$ (corresponding to the proposals for $z_s$ or $z_t$).

The STRIPE-time proposal module has a similar architecture except that as input to the MLP, we concatenate the $z_s$ variable (of dimension 8) to condition the time variables on the current shape variable.

\paragraph{STRIPE hyperparameters} We cross-validated the relevant hyperparameters of STRIPE:
\begin{itemize}
\setlength{\itemsep}{5pt}
  \setlength{\parskip}{0pt}
  \setlength{\parsep}{0pt}
    \item $\lambda:$ tradeoff between $\mathcal{L}_{quality}$ and $\mathcal{L}_{diversity}$. When increasing $\lambda$ (see Figure \ref{fig:influ-lambda}), the diversity increases and stabilizes starting from $10^{-3}$, without loosing on quality. We fixed $\lambda=1$ in all experiments.
    \item $k$: dimension of the diversifying latent variables $z$. This dimension should be chosen relatively to the hidden size of the RNN encoders and decoders (128 in our experiments). We fixed $k=16$ in all cases.
    \item $N$: the number of future trajectories to sample. We fixed $N=10$. We performed a sensibility analysis to this parameter in paper section 4.4.
\end{itemize}

For computing the DILATE loss, we used the parameters recommended in paper \cite{leguen19dilate} ($\gamma=0.01, \alpha=0.5$).

\begin{figure}[H]
    \centering
    \includegraphics[width=7cm]{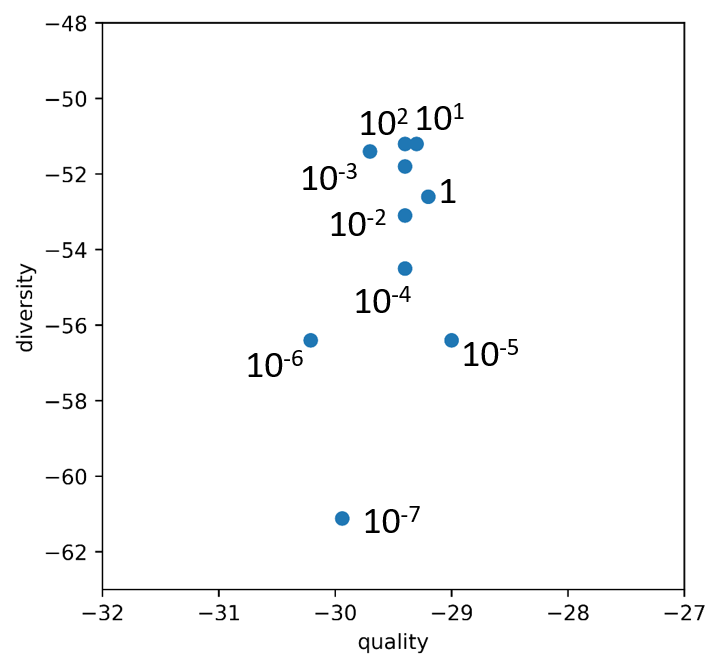}
    \vspace{-0.2cm}
    \caption{Influence of the hyperparameter $\lambda$ balancing $\mathcal{L}_{quality}$ and $\mathcal{L}_{diversity}$ for the synthetic dataset. Quality (resp. diversity) are represented by $-\text{H}_{quality}(\text{DILATE})$ (resp.  $-\text{H}_{diversity}(\text{DILATE})$), higher is better. When $\lambda$ increases, diversity increases without deteriorating quality. }
    \label{fig:influ-lambda}
\end{figure}

\subsection{Full state-of-the-art comparison results}
\label{supp:sota}

We provide here (Table \ref{supp:table3}) the full results of the state-of-the-art comparison (Table \ref{table3} in paper). We report the additional CRPS metric. We observe that STRIPE S+T obtains the best results evaluated in CRPS on the Electricity dataset (equivalent to DeepAR \cite{salinas2017deepar}), and the second best results on the Traffic dataset (only behind DeepAR that is otherwise far worse in diversity and quality).

\begin{table}[H]
                  \caption{Forecasting results on the Traffic and Electricity datasets, averaged over 5 runs (mean $\pm$ std). Metrics are scaled for readability. Best equivalent method(s) (Student t-test) shown in bold.} 
    \begin{adjustbox}{max width=\linewidth}
    \begin{tabular}{cccccc|ccccc}
    \toprule
    \multicolumn{1}{c}{} & \multicolumn{5}{c}{Traffic} & \multicolumn{5}{c}{Electricity}   \\
    \multicolumn{1}{c}{} & \multicolumn{2}{c}{MSE ($\times$ 1000)} &  \multicolumn{2}{c}{DILATE ($\times$ 100)} & \multicolumn{1}{c|}{}  & \multicolumn{2}{c}{MSE} &  \multicolumn{2}{c}{DILATE} & \multicolumn{1}{c}{}   \\ 
 Method   & mean  & best & mean & best & CRPS  & mean & best & mean & best & CRPS  \\ \hline
 Nbeats MSE \cite{oreshkin2019n} &  - & 7.8 $\pm$ 0.3 & - & 22.1 $\pm$ 0.8 & 37.1 $\pm$ 0.9  & - & 24.6 $\pm$ 0.9 & - & 29.3 $\pm$ 1.3 & 36.3 $\pm$ 0.6     \\
 Nbeats DILATE  & - & 17.1 $\pm$ 0.8 & - & 17.8 $\pm$ 0.3 & 51.0 $\pm$ 2.6  & - & 38.9 $\pm$ 1.9 & - & 20.7 $\pm$ 0.5 & 47.5 $\pm$ 0.5 \\
   Deep AR \cite{flunkert2017deepar} & 15.1 $\pm$ 1.7 & \textbf{6.6 $\pm$ 0.7} & 30.3 $\pm$ 1.9 & 16.9 $\pm$ 0.6  & \textbf{24.6 $\pm$ 1.1} &  67.6 $\pm$ 5.1  & 25.6 $\pm$ 0.4 & 59.8 $\pm$ 5.2  & 17.2 $\pm$ 0.3  & \textbf{34.5 $\pm$ 0.3}   \\
 \hline
   cVAE DILATE & \textbf{10.0 $\pm$ 1.7}  & 8.8 $\pm$ 1.6  & \textbf{19.1 $\pm$ 1.2} & 17.0 $\pm$ 1.1   & 34.4 $\pm$ 2.5  &  \textbf{28.9 $\pm$ 0.8}    &  27.8 $\pm$ 0.8 & 24.6 $\pm$ 1.4   & 22.4 $\pm$ 1.3  & 39.2 $\pm$ 0.5   \\
   Variety loss \cite{thiede2019analyzing} & \textbf{9.8 $\pm$ 0.8} & 7.9 $\pm$ 0.8  & \textbf{18.9 $\pm$ 1.4}  & 15.9 $\pm$ 1.2  & 32.4 $\pm$ 1.4  & 29.4 $\pm$ 1.0  & 27.7 $\pm$ 1.0  & 24.7 $\pm$ 1.1  & 21.6 $\pm$ 1.0  & 39.5 $\pm$ 0.8    \\
   Entropy regul. \cite{dieng2019prescribed} & 11.4 $\pm$ 1.3   & 10.3 $\pm$ 1.4  & \textbf{19.1 $\pm$ 1.4}  & 16.8 $\pm$ 1.3 &  37.0 $\pm$ 2.7  & 34.4 $\pm$ 4.1  &  32.9 $\pm$ 3.8  & 29.8 $\pm$ 3.6  & 25.6 $\pm$ 3.1 & 42.4 $\pm$ 2.3  \\
   Diverse DPP \cite{yuan2019diverse}  & 11.2 $\pm$ 1.8  & 6.9 $\pm$ 1.0   & 20.5 $\pm$ 1.0  & 14.7 $\pm$ 1.0  & 30.9 $\pm$ 2.0  &  31.5 $\pm$ 0.8  & 25.8 $\pm$ 1.3  & 26.6 $\pm$ 1.0  & 19.4 $\pm$ 1.0  & 36.6 $\pm$ 0.9    \\ 
  \textbf{STRIPE S+T} & \textbf{10.1 $\pm$ 0.4}   &  \textbf{6.5 $\pm$ 0.2}  & \textbf{19.2 $\pm$ 0.8}  & \textbf{14.2 $\pm$ 0.2}  & 29.8 $\pm$ 0.3  & 29.7 $\pm$ 0.3  & \textbf{23.4 $\pm$ 0.2}  & \textbf{24.4  $\pm$ 0.3}  & \textbf{16.9 $\pm$ 0.2}  & \textbf{34.8 $\pm$ 0.4}  \\ 
  \bottomrule
    \end{tabular}
    \end{adjustbox}
    \label{supp:table3}    
\end{table}{}

\subsection{Additional visus}
\label{sup:visus}

Wee provide additional visualizations for the Traffic and Electricity datasets that confirm that STRIPE S+T predictions are both diverse and sharp.

\subsubsection{Electricity}

\begin{tabular}{cc}
    \includegraphics[width=6.5cm]{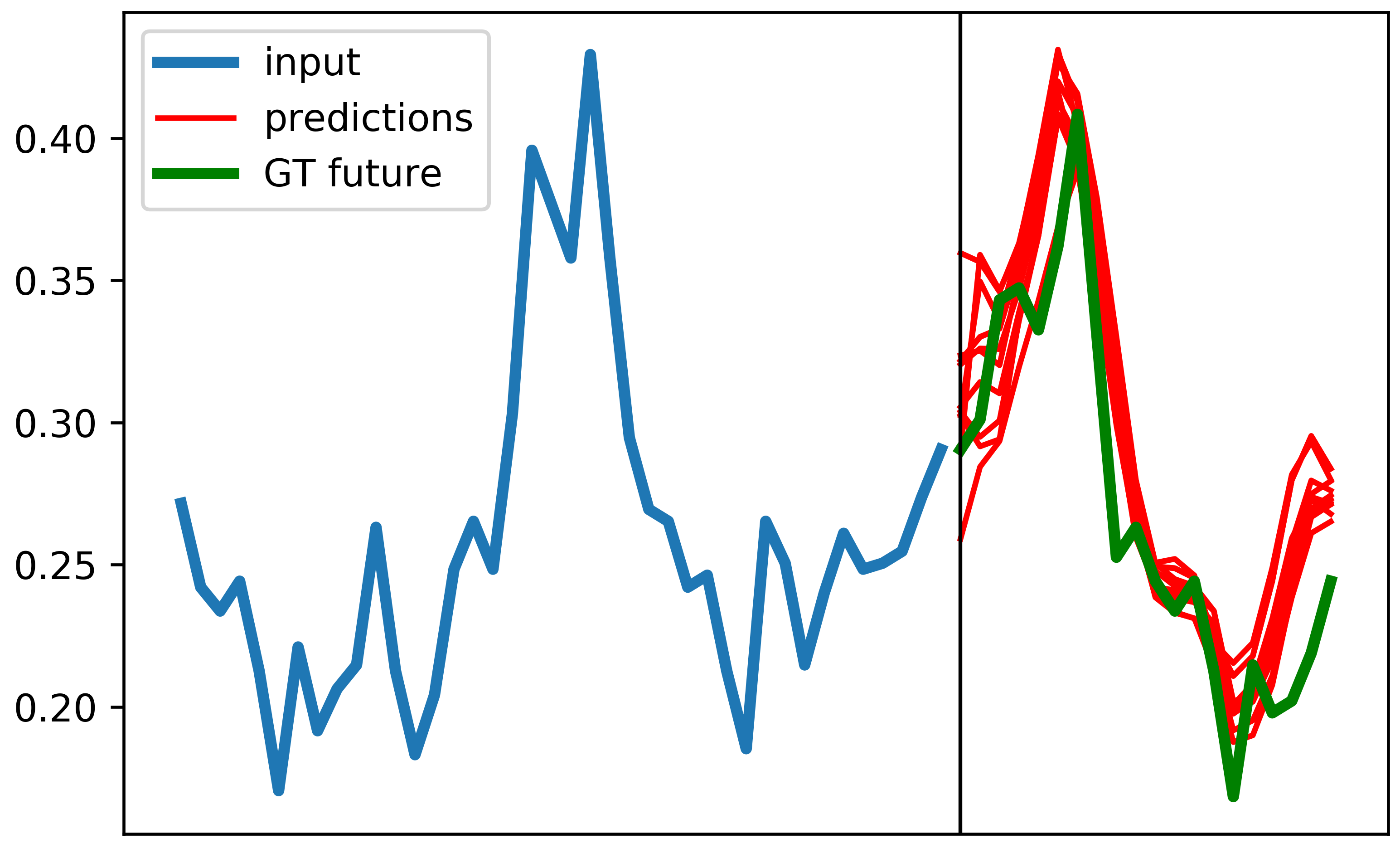}   &  
  \includegraphics[width=6.5cm]{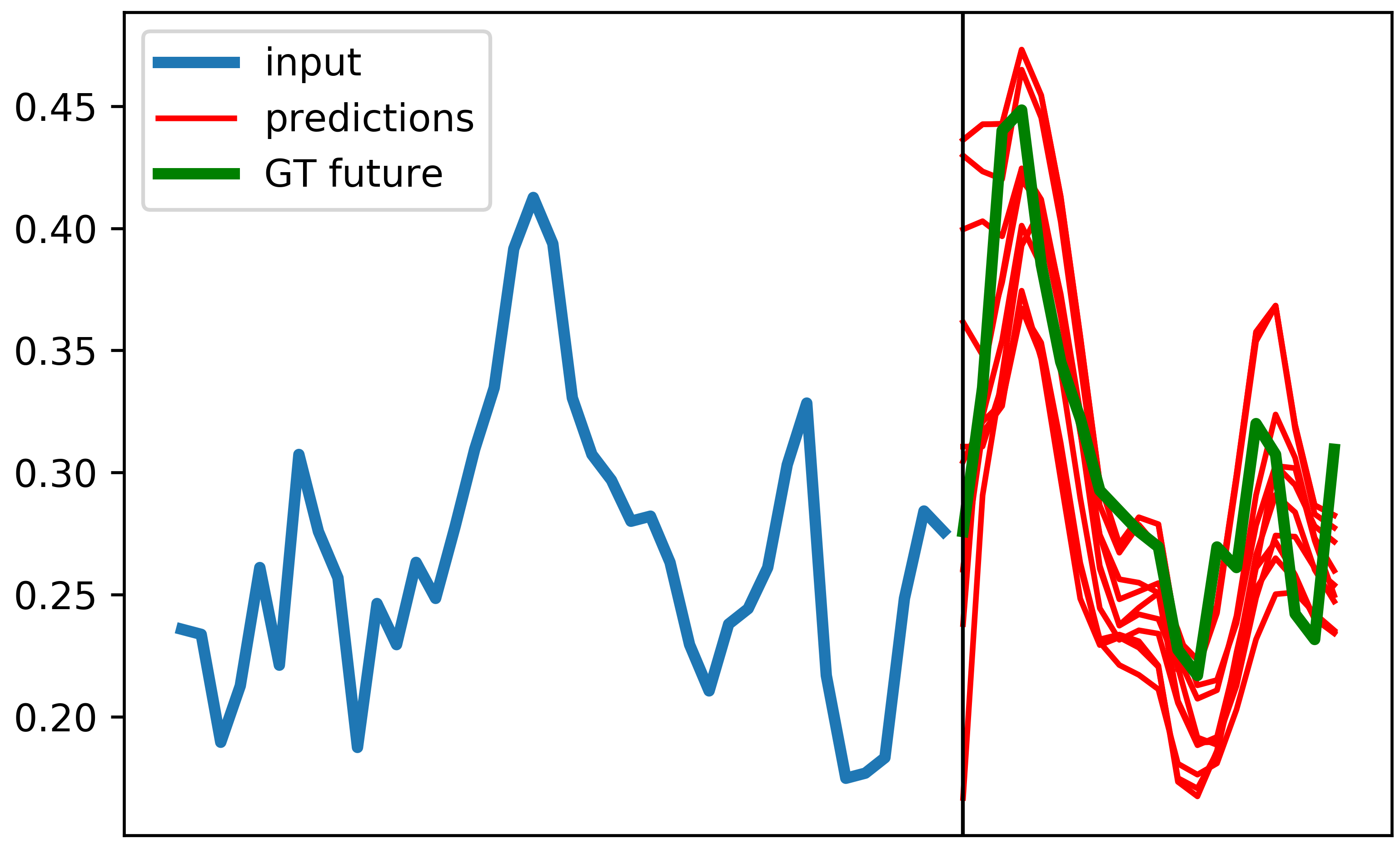}
\end{tabular}


\subsubsection{Traffic}

\begin{tabular}{cc}
    \includegraphics[width=6.5cm]{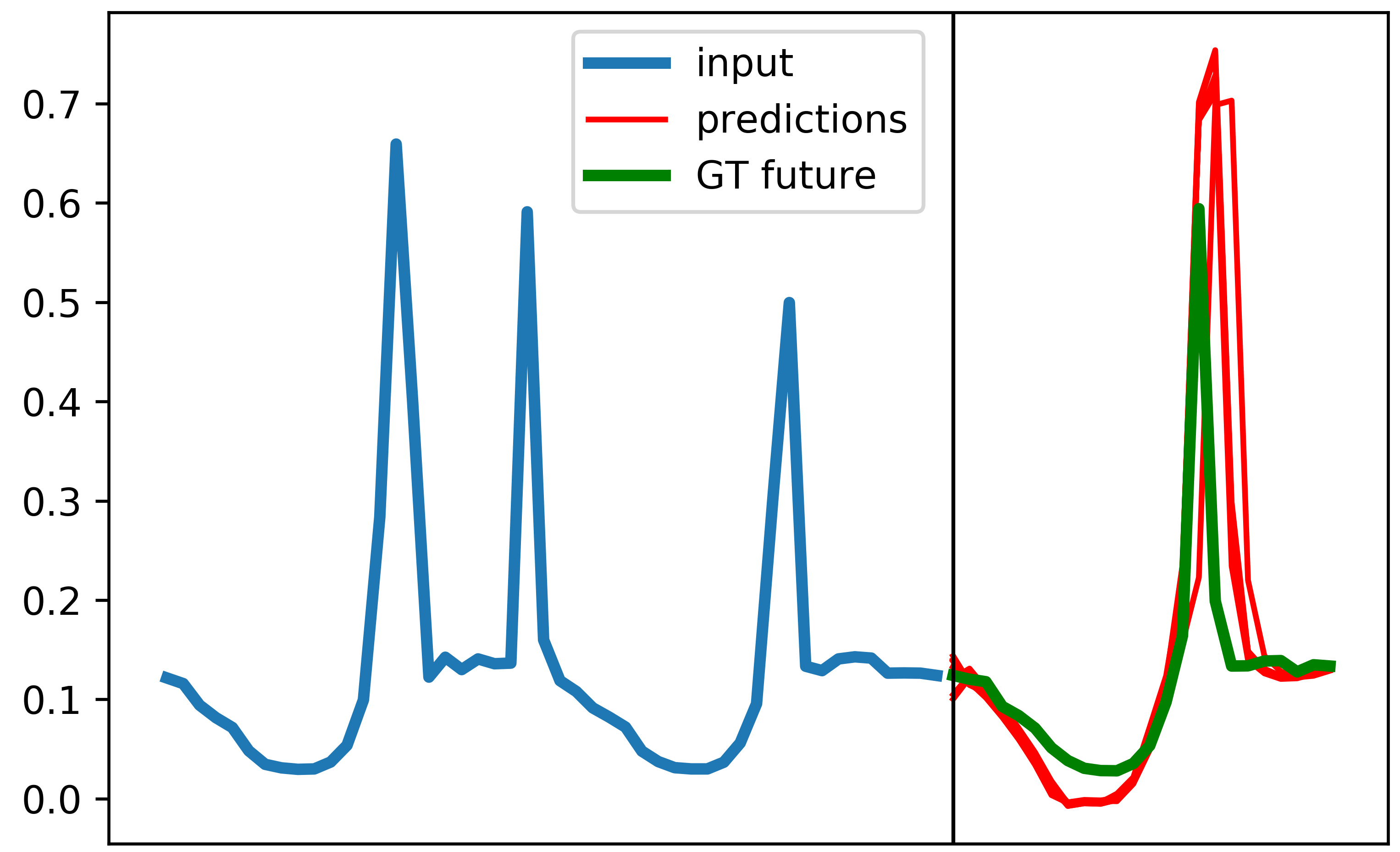}   &  
  \includegraphics[width=6.5cm]{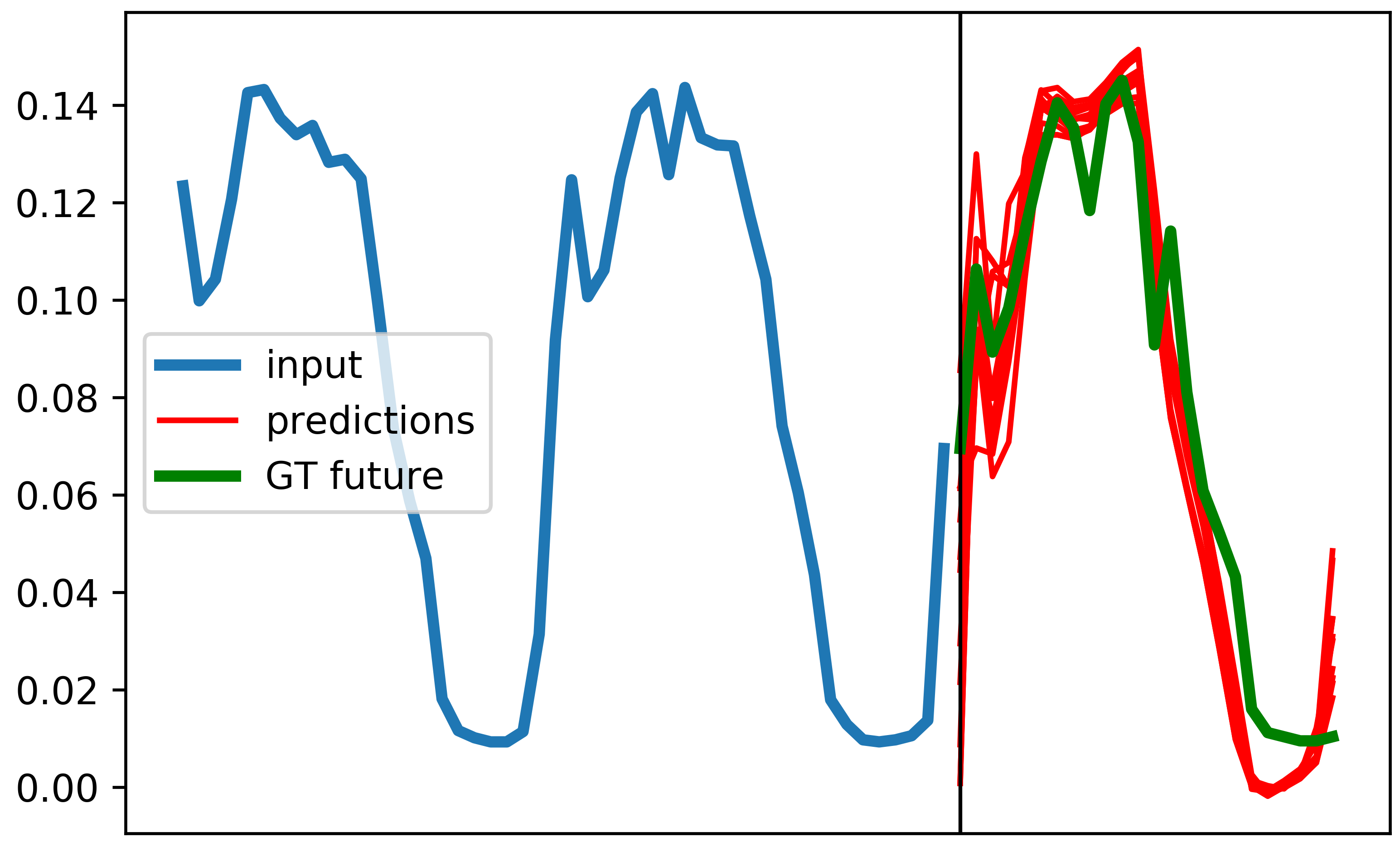}
\end{tabular}




\newpage
\bibliographystyle{amsalpha} 
\bibliography{refs.bib}